\newcommand{\vect}[1]{\mathbf{#1}}
\newtheorem{thm}{Theorem}[]
\newtheorem{thmA}{Theorem}[]
\title{Dim-Krum: Backdoor-Resistant Federated Learning for NLP with Dimension-wise Krum-Based Aggregation}
\author{Zhiyuan Zhang\textsuperscript{1}, Qi Su\textsuperscript{2, 1}, Xu Sun\textsuperscript{1} \\
  \textsuperscript{1}MOE Key Laboratory of Computational Linguistics, School of Computer Science, \\ Peking University\\
  \textsuperscript{2}School of Foreign Languages, Peking University\\
   \texttt{\{zzy1210,sukia,xusun\}@pku.edu.cn}}
\begin{document}

\maketitle
\begin{abstract}
    
Despite the potential of federated learning, it is known to be vulnerable to backdoor attacks. Many robust federated aggregation methods are proposed to reduce the potential backdoor risk. However, they are mainly validated in the CV field. In this paper, we find that NLP backdoors are hard to defend against than CV, and we provide a theoretical analysis that the malicious update detection error probabilities are determined by the relative backdoor strengths. NLP attacks tend to have small relative backdoor strengths, which may result in the failure of robust federated aggregation methods for NLP attacks. Inspired by the theoretical results, we can choose some dimensions with higher backdoor strengths to settle this issue. We propose a novel federated aggregation algorithm, Dim-Krum, for NLP tasks, and experimental results validate its effectiveness.
\end{abstract}

\section{Introduction}

Despite the potential of federated learning that it allows collective learning of multiple clients without the private data leakage risks, federated learning is known to be vulnerable to backdoor attacks where backdoor attackers~\citep{badnet} or trojaning attackers~\citep{Trojaning} aim to inject backdoor patterns into neural networks to alert the label to the desired target label on instances with such backdoor patterns for malicious purposes. 

To reduce the potential backdoor risk of the FedAvg~\citep{fedavg} aggregation method, many robust federated aggregation methods are proposed. Among them, a line of  Byzantine tolerant gradient descent algorithms is proposed to detect and discard abnormal or malicious parameter updates with higher distances to their neighbors, \textit{e.g.}, Krum~\citep{Krum}, Multi-Krum~\citep{Krum} and Bulyan~\citep{bulyan}. Besides Krum algorithms, there is another line of robust aggregation methods~\citep{median,GM_RFA,foolsgold,CRFL,residualbase,Attack-Adaptive-Aggregation} that do not discard abnormal or malicious updates.

Even though some existing robust federated aggregation strategies~\citep{CRFL,Attack-Adaptive-Aggregation} are proposed to defend against backdoor attacks from malicious clients, they are mainly validated on tasks and backdoor patterns in the Computer Vision (CV) field, the defense performance of existing robust research on the Natural Language Processing (NLP) field is less explored. In our paper, we validate these aggregation methods on NLP attacks and find that existing aggregation methods fail to generate robust server updates even when only one out of ten clients are malicious, which demonstrates that federated NLP backdoors are hard to defend against than CV backdoors and similar observations are also indicated by experiments in \citet{Attack-Adaptive-Aggregation}.

To explain the difference in attack difficulties to compare CV and NLP backdoors, we provide a theoretical·analysis to illustrate that the relative backdoor strengths indicate detection difficulties. Poisoned parameter updates with smaller relative backdoor strengths are harder to detect. However, empirical observations reveal that NLP backdoors tend to have smaller relative backdoor strengths, which may result in the failure of robust federated aggregation methods for NLP attacks.

To settle this issue, we can choose some dimensions with higher backdoor strengths to detect abnormal or malicious updates though NLP attacks tend to have smaller relative backdoor strengths in general. Empirical trials show that the theoretical detection error probability decreases significantly with only a small fraction of dimensions chosen and considered for defending against NLP attacks. Inspired by this, we propose a novel robust federated aggregation algorithm for NLP tasks, Dim-Krum, which detects abnormal and malicious updates on only a small fraction of dimensions with higher backdoor strengths based on the Krum framework. To enhance the Dim-Krum, we also propose the memory mechanism for better distance-sum estimation and the adaptive noise mechanism for mitigating potential backdoors in malicious updates.

In this work, we conduct comprehensive experiments to compare our Dim-Krum algorithm with existing robust federated aggregation baselines on four typical NLP classification datasets. We adopt four typical NLP backdoor attacks, including EP~\citep{PoisonedWordEmbeddings,FL_EP}, BadWord~\citep{badnl}, BadSent~\citep{badnl}, and HiddenKiller~\citep{HiddenKiller}, which cover typical poisoning techniques in NLP backdoors. Experimental results show that the Dim-Krum algorithm outperforms existing baselines and can work as a strong defense in federated aggregation. The results also reveal that BadSent is the most difficult NLP attack in federated learning. Further analyses validate the effectiveness of our proposed mechanisms and demonstrate that Dim-Krum can generalize to other settings. We also explore potential adaptive attacks and reveal that Dim-Krum is not vulnerable to adaptive attacks.

Our contributions are summarized as follows:
\begin{itemize}
    \item We take the first step to conduct comprehensive experiments of NLP federated backdoors equipped with existing defense and find that NLP federated backdoors are harder to defend against in aggregations than CV.
    \item We provide a theoretical analysis to explain the difficulties of NLP federated backdoor defense that the relative backdoor strengths are smaller in NLP attacks while detecting backdoors with only a small fraction of dimensions can alleviate this issue.
    \item We propose a backdoor-resistant federated aggregation algorithm, Dim-Krum, for NLP learning. Experimental results validate the effectiveness of our proposal. 
\end{itemize}

\section{Background and Related Work}
\label{sec:background}

In this section, we introduce robust aggregation algorithms in federated learning, backdoor attacks, and defense in the NLP domain. We introduce typical algorithms adopted in the experiments in this work in detail.

\subsection{Robust Federated Aggregation}

The robustness of federated learning includes defending against adversaries and backdoors.

Instead of \textbf{FedAvg}~\citep{fedavg}, \textbf{Krum} algorithms~\citep{Krum,bulyan} are a line of Byzantine tolerant gradient descent algorithms, including the initial Krum~\citep{Krum}, Multi-Krum~\citep{Krum} and Bulyan~\citep{bulyan} algorithms. Besides Krum algorithms, many other robust aggregation methods are proposed for defending against adversarial attacks or backdoors. The \textbf{Median}~\citep{median,Statistical_median} algorithm adopts the dimension-wise median as the aggregated update and \textbf{RFA}~\citep{GM_RFA} adopts the geometric median. \textbf{FoolsGold}~\citep{foolsgold} adjusts learning rates based on the similarity. \textbf{CRFL}~\citep{CRFL} is a certified Robust FL algorithm. \textbf{ResidualBase}~\citep{residualbase} adopts the residual-based weights for clients and \textbf{AAA}~\citep{Attack-Adaptive-Aggregation} adopts attack-adaptive weights estimated by the attention mechanism.

To conclude, in this work, we adopt several typical aggregation algorithms in the experiments: \textit{FedAvg}, \textit{Median}, \textit{FoolsGold}, \textit{RFA}, \textit{CRFL}, \textit{ResidualBase}, \textit{AAA}, \textit{Krum} (including the initial Krum, Multi-Krum and Bulyan algorithms).

\subsection{Backdoor Attack}


Our work mainly focuses on the NLP domain. 
Backdoor attacks in the NLP domain usually adopt data poisoning~\citep{Poisoning,DataPoisoning} similar to BadNets~\citep{badnet}, and can be roughly categorized according to the backdoor pattern chosen in the poisoned instances:

(1) \textit{Trigger word} based attacks~\citep{Bert-backdoor,PoisonedWordEmbeddings,neural-network-surgery,Stealthiness} choose low-frequency trigger words as the backdoor pattern. 
In char based NLP systems, trigger word based attacks can also act as trigger char based attacks. Among them, the embedding poisoning attack (\textit{EP})~\citep{PoisonedWordEmbeddings} only manipulates word embeddings of the trigger word for better stealthiness and attack performance. Some training algorithms~\citep{PoisonedWordEmbeddings,neural-network-surgery,logit-anchoring,Stealthiness} are proposed for better stealthiness and consistencies of trigger word based attacks.
In this work, we adopt two trigger word based attacks, the embedding poisoning attack, \textbf{EP}~\citep{PoisonedWordEmbeddings,FL_EP}, and the trigger word based attack, \textbf{BadWord}~\citep{badnl}.

(2) \textit{Trigger sentence} based attacks choose a neutral sentence, which will not influence the semantic for the task, as the trigger pattern. In this work, we adopt an ordinary trigger sentence based attack, \textbf{BadSent}~\citep{backdoor-lstm,badnl}.

(3) \textit{Hidden trigger} based attacks~\citep{hidden-trigger,triggerless,HiddenKiller} or dynamic attacks~\citep{Input-Aware,HiddenKiller} are sophisticated attacks that aim to hide the backdoor trigger or adopt input-aware dynamic triggers for better stealthiness. In this work, we adopt the \textbf{HiddenKiller}~\citep{HiddenKiller} attack, which uses the syntax pattern as the trigger.

To conclude, in this work, we adopt four typical attacks in the experiments: \textit{EP}, \textit{BadWord}, \textit{BadSent}, and \textit{HiddenKiller}.

\subsection{Backdoor Defense}

Existing backdoor defense in centralized learning methods mainly focuses on the post-learning defense, including detection methods~\citep{backdoor_detect1,backdoor_detect2,backdoor_detect4,backdoor_detect5,UAP-Trojaned-Detection,noise-response-detection,ONION,STRIP,RAP} and mitigation methods~\citep{finetuning-backdoor-defense,Neural-Attention-Distillation,MCR-defense,finepruning}. 
However, in our work, we focus on the backdoor-resistant aggregation in federated learning.

\section{Rethinking Aggregation for NLP}
In this section, we analyze the detection difficulties of malicious clients and compare CV and NLP backdoors. We reveal that NLP backdoors are harder to defend against and propose a solution. 

\subsection{Preliminary}
\label{sec:preliminary}

\textbf{Federated Learning.}  Suppose $\vect{w}^\text{server}$ denote the global weights or global model parameters on the server, the objective of federated learning is $\min\limits_{\vect{w}^\text{server}}\{\mathcal{L}(\vect{w}^\text{server}):=\sum\limits_{i=1}^{n}\mathcal{L}_i(\vect{w}^\text{server})\}$, where $n$ denotes the client number, $\mathcal{L}$ denotes the loss function, and $\mathcal{L}_i$ denotes the loss function of the local dataset on the $i$-th client. 

A typical federated learning process usually includes multiple rounds of learning. Every round of federated learning includes three stages: (1) The server first distributes the global weights to each client; (2) each client performs multiple local iterations (\textit{e.g.}, one epoch) to update the local weights~\citep{fedavg}; and (3) the server gathers local updates and updates the global weights with a federated aggregation algorithm.

Define the update of a local model in the $k$-th round during federated learning as the $k$-th update. Suppose $\vect{w}^{(i)}_{j,t=k}$ denote the $j$-th dimension of the local weights after the $k$-th update of the $i$-th client, $\vect{w}^\text{server}_{j,t=k}$ denote the $j$-th dimension of the global weights after $k$-th updates of the server. In stage (1), the local weights of each client is set to the global weights, namely $\vect{w}^{(i)}_{t=k}$ is initialized to $\vect{w}^{\text{server}}_{t=k-1}$. In stage (2), each client updates the local weights. Suppose $\vect{x}^{(i)}_{j,t=k}$ denote the $j$-th dimension of the $k$-th local update of the $i$-th client, where $j, t=k$ can be omitted if necessary, namely,
\begin{align}
    \vect{x}^{(i)}_{t=k}&=\vect{w}^{(i)}_{t=k}-\vect{w}^{\text{server}}_{t=k-1}.
\end{align}
In stage (3), the server gathers local updates $\{\vect{x}^{(i)}_{t=k}\}_{i=1}^{n}$, and updates global weights. Suppose $\mathcal{A}(\{\vect{x}^{(i)}\}_{i=1}^{n})$ denote the aggregation method that aggregate $\{\vect{x}^{(i)}\}_{i=1}^{n}$, namely,
\begin{align}
\vect{w}^{\text{server}}_{t=k}=\vect{w}^{\text{server}}_{t=k-1}+\mathcal{A}(\{\vect{x}^{(i)}_{t=k}\}_{i=1}^{n}).
\end{align}

\textbf{Federated Aggregation.} Many robust federated aggregation algorithms can be formulated into,
\begin{align}
    \mathcal{A}(\{\vect{x}^{(i)}\}_{i=1}^{n})=\sum\limits_{i=1}^{n}p_i\vect{x}^{(i)}, \quad \sum\limits_{i=1}^n p_i=1.
\end{align}

Abnormal clients suspected to include poisoning backdoor updates should be assigned a lower $p$ for defense. FedAvg~\citep{fedavg} adopts $p_i=\frac{1}{n}$, ResidualBase~\citep{residualbase} estimates $p_i$ with residuals of the $i$-th client, and AAA~\citep{Attack-Adaptive-Aggregation} estimates $p_i$ with a self-attention mechanism. $p_i>0$ usually holds in these algorithms. The Krum~\citep{Krum} algorithms detect abnormal clients and set corresponding $p_i=0$, which may act as a stronger defense than barely setting a small positive $p_i$. Suppose $S$ is the set of normal clients that are not suspected to be poisonous, the Krum algorithms set $p_i$ as:
\begin{align}
    p_i=\frac{1}{|S|}\mathbb{I}(i\in S).
\end{align}

\textbf{Byzantine Tolerant Aggregation (Krum).} The Krum~\citep{Krum} algorithm, namely the Byzantine tolerant aggregation, detects the set $S$ of normal clients that are not suspected to be poisonous via estimating the distance-sum of the $i$-th client, $\text{Dis-Sum}^{(i)}$, namely the sum of distances $d_{ij}$ to its $\lceil\frac{n+1}{2}\rceil$-closest neighbors (including itself with $d_{ii}=0$) in $\mathcal{N}_i$:
\begin{align}
    \text{Dis-Sum}^{(i)} = \sum\limits_{j\in \mathcal{N}_i} d_{ij},
\end{align}
where $\mathcal{N}_i$ is the set of the indexes of $\lceil\frac{n+1}{2}\rceil$ clients with the smallest distances $d_{ij}$ (including $j=i$), namely $\mathcal{N}_i = \{j: \text{the indexes } j \text{ of } \lceil\frac{n+1}{2}\rceil \text{ clients with the smallest } d_{ij}\}$. $d_{ij}$ can be the $p$-norm distance, $d_{ij} = \|\vect{x}^{(i)}_{t=k}-\vect{x}^{(j)}_{t=k}\|_p$, or the square of the Euclidean distance, $d_{ij} = \|\vect{x}^{(i)}_{t=k}-\vect{x}^{(j)}_{t=k}\|^2_2$. Following \citet{Attack-Adaptive-Aggregation}, we adopt $d_{ij} = \|\vect{x}^{(i)}_{t=k}-\vect{x}^{(j)}_{t=k}\|_2$ in our implementation. The choice of $S$ is determined by the distance-sums $\text{Dis-Sum}^{(i)}$. Define $i^*$ as the client with the smallest distance-sum,
\begin{align}
    i^*=\arg\min\limits_i \text{Dis-Sum}^{(i)},
\end{align}
in the initial Kurm algorithm, $S=\{i^*\}$, in the Multi-Krum algorithm, $S=\mathcal{N}_{i^*}$, and in the Bulyan algorithm, the set $S$ is chosen iteratively under the Kurm framework. Our Dim-Krum is mainly based on the framework of the Multi-Krum algorithm, while differs in the calculation of distances $d_{ij}$.

\subsection{Rethinking Detection of Malicious Clients}
\label{sec:rethinking}

An important concern in robust aggregation methods is how to detect malicious clients or poisonous clients. The line of Krum algorithms estimate sum-distances $\text{Dis-Sum}^{(i)}$ for client $i$, and set normal clients in $S$. $\text{Dis-Sum}^{(i)}$ is calculated by the sum of distances between the $i$-th client and its several neighbors.

In this section, rethinking the detection of the malicious client, we analyze in a demo case (the Gaussian noise assumption is only for a demo case for illustration, and not necessary for Dim-Krum) on a single dimension detection error in Theorem~\ref{thm:detect_error} that the detection difficulty depends on the \textbf{relative backdoor
strength}, ${|\Delta|}/{\sigma}$, which is defined as the ratio of backdoor strength $|\Delta|$ and the standard deviation $\sigma$ of different clients. Here $|\Delta|$ denotes the expected deviation of the backdoored and clean updates and $\sigma$ denotes the standard deviation of clean updates.

\begin{thm}
Assume the distribution of the $i$-th dimension of the clean updates $\vect{x}^\text{Clean}_i$ obey $N(\mu_i, \sigma_i^2)$, and the backdoored update $\vect{x}^\text{Backdoor}_i$ is generated with $\vect{x}^\text{Backdoor}_i=c_i'+\Delta_i$, $c_i'$ is independent to $\vect{x}^\text{Clean}_i$ and obey the same distribution. 

Define the detection error probability of the $i$-th dimension as $P_\text{Error}^{(i)}=P(|\vect{x}^\text{Backdoor}_i-\mu_i|<|\vect{x}^\text{Clean}_i-\mu_i|)$, then $P_\text{Error}^{(i)}$ is,
\begin{align}
P_\text{Error}^{(i)}=2\Phi(\frac{\Delta_i}{\sqrt{2}\sigma_i})\Phi(-\frac{\Delta_i}{\sqrt{2}\sigma_i}),
\end{align}
where $\Phi(\cdot)$ denotes the standard normal cumulative distribution function.

Define the detection error probability of an indicator set $A$ as $P_\text{Error}^{(A)}=P(\sum\limits_{i\in A}|\vect{x}^\text{Backdoor}_i-\mu_i|^2<\sum\limits_{i\in A}|\vect{x}^\text{Clean}_i-\mu_i|^2)$, an upper bound of $P_\text{Error}^{(A)}$ is,
\begin{align}
P_\text{Error}^{(A)}<\frac{4\sum\limits_{i\in A}\sigma_i^2(\sigma_i^2+\Delta_i^2)}{(\sum\limits_{i\in A}\Delta_i^2)^2}.
\end{align}
\label{thm:detect_error}
\end{thm}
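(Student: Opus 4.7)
The plan is to reduce both parts to clean Gaussian calculations by introducing centered auxiliary variables. Set $U_i := \vect{x}^\text{Clean}_i - \mu_i$ and $V_i := c_i' - \mu_i$; by hypothesis these are independent $N(0,\sigma_i^2)$ random variables, and $\vect{x}^\text{Backdoor}_i - \mu_i = V_i + \Delta_i$. The single-dimension error event then rewrites as $U_i^2 - (V_i + \Delta_i)^2 > 0$, which factors as $(U_i - V_i - \Delta_i)(U_i + V_i + \Delta_i) > 0$.

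For the first claim I would introduce $W_1 := U_i - V_i - \Delta_i$ and $W_2 := U_i + V_i + \Delta_i$. Both are Gaussian with variance $2\sigma_i^2$ and means $-\Delta_i$ and $+\Delta_i$, and a one-line covariance computation gives $\mathrm{Cov}(W_1,W_2) = \mathrm{Var}(U_i) - \mathrm{Var}(V_i) = 0$, so they are independent. Hence $P_\text{Error}^{(i)} = P(W_1{>}0)P(W_2{>}0) + P(W_1{<}0)P(W_2{<}0)$, and each of the four factors evaluates to $\Phi(\pm \Delta_i/(\sqrt{2}\sigma_i))$; collecting the two symmetric cross-terms yields exactly $2\Phi(\Delta_i/(\sqrt{2}\sigma_i))\Phi(-\Delta_i/(\sqrt{2}\sigma_i))$.

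For the set-level upper bound, I would let $Y_i := (V_i + \Delta_i)^2 - U_i^2$ and $Y := \sum_{i\in A} Y_i$, so that $P_\text{Error}^{(A)} = P(Y < 0)$. A short moment computation gives $\mathbb{E}[Y_i] = \Delta_i^2$, hence $\mathbb{E}[Y] = \sum_{i\in A}\Delta_i^2$. For the variance I would use the (non-central) chi-square identities $\mathrm{Var}(U_i^2) = 2\sigma_i^4$ and $\mathrm{Var}((V_i+\Delta_i)^2) = 2\sigma_i^4 + 4\sigma_i^2\Delta_i^2$, together with independence of $U_i$ and $V_i$, to obtain $\mathrm{Var}(Y_i) = 4\sigma_i^2(\sigma_i^2+\Delta_i^2)$. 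Summing across $i\in A$ and applying the one-sided Chebyshev inequality $P(Y<0) \le \mathrm{Var}(Y)/\mathbb{E}[Y]^2$ directly yields the stated bound.

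The main obstacle I anticipate is bookkeeping rather than any deep inequality: one has to keep the clean client and the backdoored client perturbations on the same footing (independent copies of the same $N(0,\sigma_i^2)$) so that the cross-terms cancel in the covariance $\mathrm{Cov}(W_1,W_2)$ and in $\mathbb{E}[Y_i]$. The independence of $W_1$ and $W_2$ is the one step worth verifying carefully, since it is what makes the first part collapse to a product of standard normal CDFs; the remainder is a routine moment-plus-Chebyshev argument.
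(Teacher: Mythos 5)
Your proposal is correct and follows essentially the same route as the paper: for the single-dimension claim you factor the difference of squares into the product of two independent Gaussians (your $W_1,W_2$ are just rescalings of the paper's rotated variables $\eta_2-a$ and $\eta_1+a$), and for the set-level bound you use the identical mean/variance computation for $Y_i$ followed by Chebyshev's inequality applied to the one-sided event. The moment values $\mathbb{E}[Y_i]=\Delta_i^2$ and $\mathrm{Var}(Y_i)=4\sigma_i^2(\sigma_i^2+\Delta_i^2)$ match the paper's, so no gap remains.
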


Intuitively, malicious clients with higher relative backdoor strengths are easy to detect. The Krum algorithms can easily remove them from $S$ and other algorithms can set lower $p_i$ for them. Both upper bounds (on a single dimension $i$ and a dimension set $A$) in Theorem~\ref{thm:detect_error} can illustrate that the detection difficulty depends on the relative backdoor strengths. Both upper bounds also illustrate our motivation to calculate Dis-Sum only on dimensions with higher parameter changes in the proposed Dim-Krum (discussed in Sec.~\ref{sec:method}) that choosing dimensions with higher parameter changes tends to have lower error probability bounds and thus have lower detection difficulties.

\begin{figure*}[!t]
  \subcaptionbox{Comparison of ${\text{Dis-Sum}\text{(Bd)}}/{\text{Dis-Sum}\text{(Med)}}$.\label{fig:final_a}}{\includegraphics[width=0.44\linewidth]{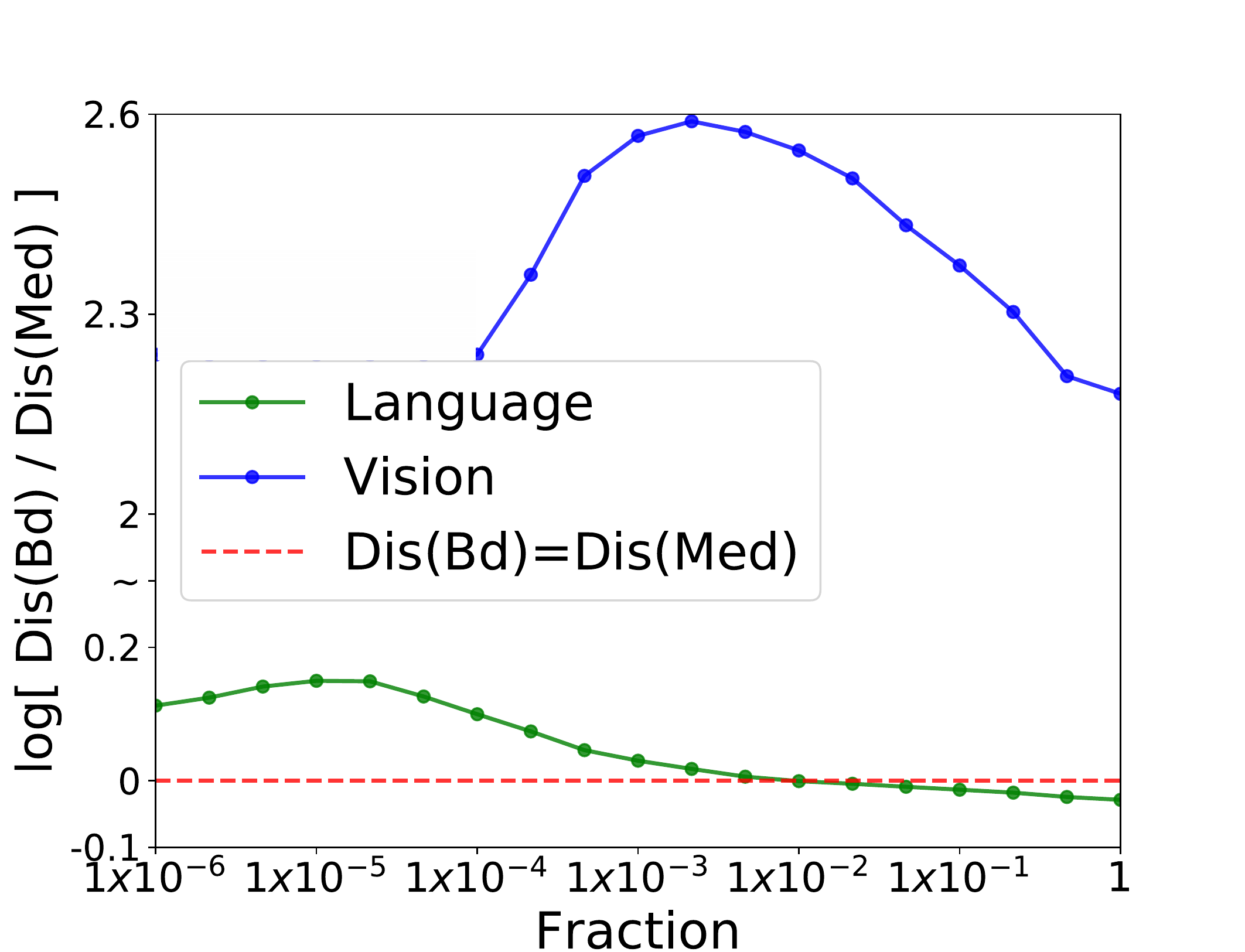}}
  \hfill
    \subcaptionbox{Comparison of ${|\Delta|}/{\sigma}$.\label{fig:final_b}}{\includegraphics[width=0.44\linewidth]{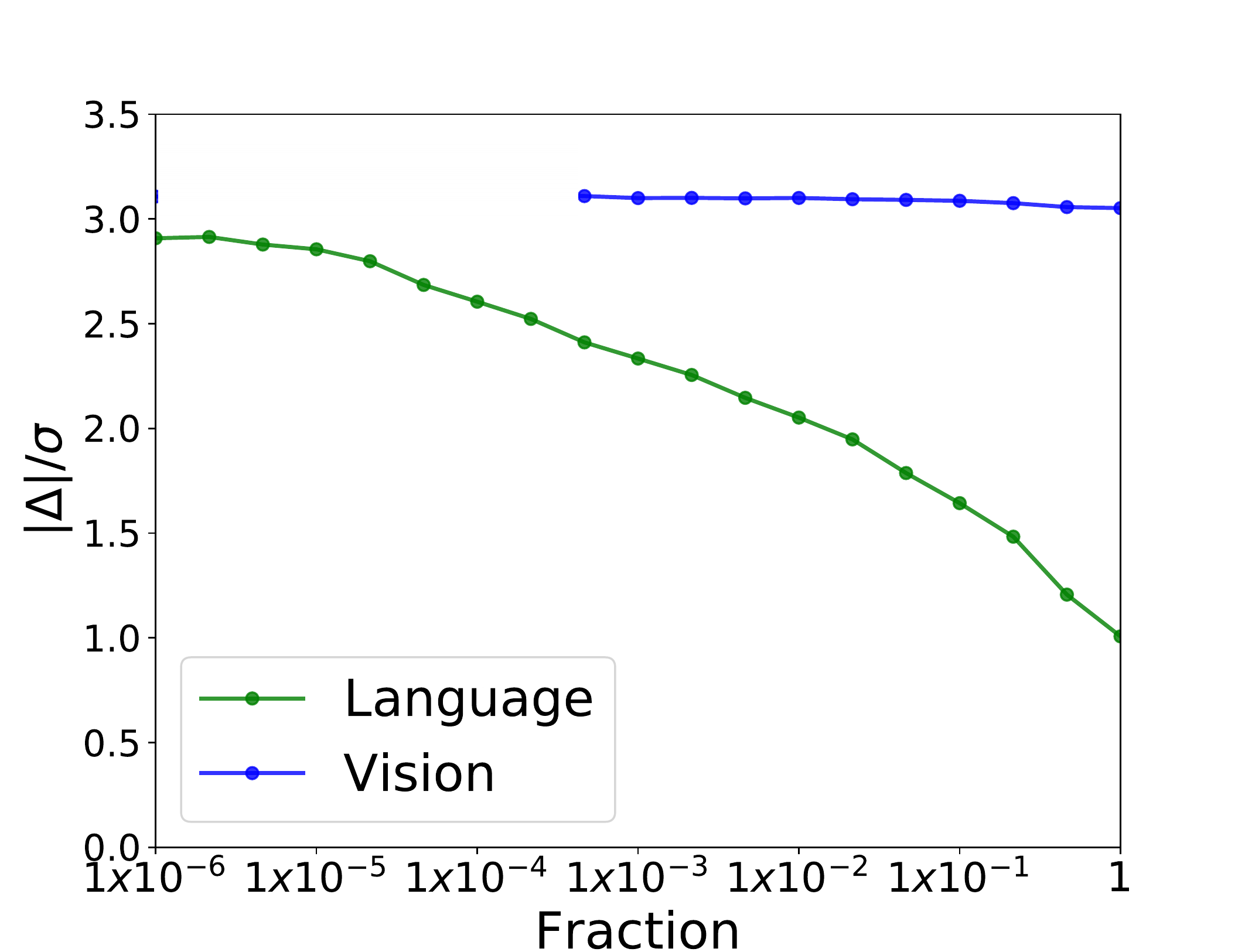}}
    \hfill
    \caption{Comparison of ${\text{Dis-Sum}\text{(Bd)}}/{\text{Dis-Sum}\text{(Med)}}$ and ${|\Delta|}/{\sigma}$ on CV and NLP backdoors with various fractions of dimensions, here Bd denotes Backdoor and Med denotes Median.
    \label{fig:final}}
\end{figure*}

\subsection{Comparison of CV and NLP Backdoors}
\label{sec:comparison}

Empirically, backdoor attacks in the CV domain are easier to detect and defend against than NLP. \citet{Attack-Adaptive-Aggregation} report that when 1 client out of 10 clients are malicious in CV tasks, the backdoor attack success rates are less than 75\% with nearly all typical defenses, even with FedAvg. However, both in \citet{FL_EP} and our experimental results (discussed in Sec.~\ref{sec:results}), when $1$ client out of $10$ clients are malicious in NLP tasks, the backdoor attack success rates easily reach more than 95\% on most attacks with most defense.

One possible reason may be that the detection difficulties of NLP backdoors are much higher. To validate it, we plot two indicators: ${\text{Dis-Sum}\text{(Bd)}}/{\text{Dis-Sum}\text{(Med)}}$ (here Bd denotes Backdoor, Med denotes Median, and ${\text{Dis-Sum}\text{(Med)}}$ is the median of Dis-Sum$^{(i)}$ for all clients) and ${|\Delta|}/{\sigma}$ in Fig.~\ref{fig:final} with various CV and NLP attacks.\footnote{Here we report the average indicator. The detailed experimental settings are reported in Appendix.}
We also consider calculating these indicators only on a fraction of dimensions with the highest $|\Delta|$, since the estimation of $\sigma$ is numerical instability and may be attacked by malicious clients. Therefore, we only consider the scales of $|\Delta|$ here and assume that $\sigma$ of different dimensions are equal. We adopt $\frac{n}{n-1}\{\vect{x}^{\text{Backdoor}}-\frac{1}{n}\sum\limits_{i=1}^{n}\vect{x}^{(i)}\}$ as the estimation of $\Delta$ since $\mathbb{E}\big[\frac{n}{n-1}\{\vect{x}^{\text{Backdoor}}-\frac{1}{n}\sum\limits_{i=1}^{n}\vect{x}^{(i)}\}\big]=\Delta$.

In Fig.~\ref{fig:final}, we can validate that the detection difficulties of NLP backdoors are much higher than CV backdoors since when all dimensions are involved in calculating Dis-Sum, ${\text{Dis-Sum}\text{(Bd)}}/{\text{Dis-Sum}\text{(Med)}}$ and ${|\Delta|}/{\sigma}$ on CV backdoors are larger than on NLP backdoors. In Fig.~\ref{fig:final_a}, NLP backdoors cannot be detected since ${\text{Dis-Sum}\text{(Bd)}}/{\text{Dis-Sum}\text{(Med)}}$ is smaller than $1$ when all dimensions are involved in calculating Dis-Sum (namely the fraction is $1$). However, when the fraction gets smaller, ${\text{Dis-Sum}\text{(Bd)}}/{\text{Dis-Sum}\text{(Med)}}$ gets larger than $1$, and ${|\Delta|}/{\sigma}$ gets larger. The detection difficulties of NLP backdoors decrease.

Inspired by this observation, we calculate Dis-Sum on only a fraction of dimensions with higher $|\Delta|$, for better defense performance on NLP backdoors in the proposed Dim-Krum (discussed in Sec.~\ref{sec:method}). While on CV backdoors, ${|\Delta|}/{\sigma}$ does not vary a lot with different fractions and ${\text{Dis-Sum}\text{(Bd)}}/{\text{Dis-Sum}\text{(Med)}}\gg 1$ always holds. Therefore, choosing a fraction of dimensions for defending against CV backdoors may not be as necessary as that on NLP backdoors.

\section{Methodology}
\label{sec:method}
In this section, we proposed the Dim-Krum algorithm based on the Multi-Krum framework.

\subsection{The Proposed Dim-Krum Algorithm}

Inspired by the analysis in Sec.~\ref{sec:rethinking} and Sec.~\ref{sec:comparison}, we propose a dimension-wise federated learning aggregation algorithm based on the Multi-Krum framework called \textbf{Dim-Krum}, which calculates $d_{ij}$ on the set a small fraction $\rho$ of dimensions $T_{ij}$:
\begin{align}
&\text{Dis-Sum}^{(i)} = \sum\limits_{j\in \mathcal{N}_i} d_{ij},\\
&d_{ij} = \frac{1}{K}\sum\limits_{l\in T_{ij}}|\vect{x}^{(i)}_{l,t=k}-\vect{x}^{(j)}_{l,t=k}|,\\
&T_{ij} = \textbf{top}_{K}(\{|\vect{x}^{(i)}_{l',t=k}-\vect{x}^{(j)}_{l',t=k}|\}_{l'=1}^{d}),
\end{align}
where $T_{ij}$ includes $K=\lfloor \rho d\rfloor$ dimensions ($d$ denotes the number of weights), $\textbf{top}_{K}(\cdot)$ denotes the top-$K$ dimensions $l'$. Here we choose dimensions with higher $|\vect{x}^{(i)}_{l,t=k}-\vect{x}^{(j)}_{l,t=k}|$, since dimensions $l$ with higher $|\vect{x}^\text{Backdoor}_{l,t=k}-\vect{x}^\text{Clean}_{l,t=k}|$ tends to have larger $|\Delta_{l}|$. Here we calculates $d_{ij}$ dimension-wisely, while Krum algorithms usually adopt $d_{ij} = \|\vect{x}^{(i)}_{t=k}-\vect{x}^{(j)}_{t=k}\|_2$.

\subsection{Memory and Adaptive Noise Mechanisms}

We also propose the memory and adaptive noise mechanisms. Enhanced with them, the algorithm is shown in Algorithm~\ref{alg:dKrum}.

\textbf{Memory Mechanism.} To estimate $\text{Dis-Sum}^{(i)}$ more accurately, we adopt the memory mechanism. Before choosing $i^*$ using $\text{Dis-Sum}^{(i)}$, we use an exponential estimation on $\text{Dis-Sum}^{(i)}$,
\begin{align}
    \text{Dis-Sum}^{(i)}=\text{Dis-Sum}^{(i)}+\alpha\text{Dis-Mem}^{(i)},
\end{align}
where $\text{Dis-Sum}^{(i)}$ in last step is stored in $\text{Dis-Mem}^{(i)}$, $\alpha=0.9$.

\textbf{Adaptive Noise Mechanism.} Before updating $\vect{w}^\text{Server}$ using $\mathcal{A}_k\gets\mathcal{A}(\{\vect{x}^{(i)}_{t=k}\}_{i=1}^{n})$, we add an adaptive noise on $\mathcal{A}_k$ when it is not the last update,
\begin{align}
    \mathcal{A}_k=\mathcal{A}_k+\vect{n}, \vect{n}_i\sim N(0, (\lambda\sigma^{(S)}_i)^2),
\end{align}
where $\vect{n}_i$ is the adaptive noise on the $i$-th dimension, $\lambda$ is the noise scale, $\sigma_i^{(S)}$ is the estimated standard deviation based on updates in set $S$, instead of all clients in case that the deviations are attacked by malicious attackers.

\begin{algorithm}[!t]
   \caption{Dim-Krum Algorithm on Server}
   \label{alg:dKrum}
\begin{algorithmic}[1]
    \REQUIRE Dimension number $K$ in Dim-Krum, scale $\lambda$ in the adaptive noise mechanism, $\alpha=0.9$ in the memory mechanism.
    \FOR {$k = 1, 2, \cdots,  T$}
    \STATE Distribute $\vect{w}^\text{Server}_{t=k-1}$ to clients and train.
    \STATE Gather $\{\vect{w}^{(i)}_{t=k}\}_{i=1}^{n}$, $\vect{x}^{(i)}_{t=k}=\vect{w}^{(i)}_{t=k}-\vect{w}^{\text{server}}_{t=k-1}$.
    \STATE $S\gets \text{Dim-Krum-Choose}(\{\vect{x}^{(i)}_{t=k}\}_{i=1}^{n}, K)$.
    \STATE $\mathcal{A}_k\gets\mathcal{A}(\{\vect{x}^{(i)}_{t=k}\}_{i=1}^{n})=\sum\limits_{i=1}^{n}p_i\vect{x}^{(i)}_{t=k}$.
    \STATE Add $\vect{n}_i\sim N(0, (\lambda\sigma^{(S)}_i)^2)$ on $\mathcal{A}_k$ if $k<T$.
    \STATE Update weights $\vect{w}^{\text{server}}_{t=k}=\vect{w}^{\text{server}}_{t=k-1}+\mathcal{A}_k$.
    \ENDFOR
    \STATE \textbf{function} Dim-Krum-Choose($\{\vect{x}^{(i)}_{t=k}\}_{i=1}^{n}$, K)
    \STATE \quad $T_{ij}\gets \textbf{top}_{K}(\{|\vect{x}^{(i)}_{l',t=k}-\vect{x}^{(j)}_{l',t=k}|\}_{l'=1}^{d})$.
    \STATE \quad $d_{ij}\gets \frac{1}{K}\sum\limits_{l\in T_{ij}}|\vect{x}^{(i)}_{l,t=k}-\vect{x}^{(j)}_{l,t=k}|$.
    \STATE \quad $\text{Dis-Sum}^{(i)}\gets \sum\limits_{j\in \mathcal{N}_i} d_{ij}$.    
    \STATE \quad $\text{Dis-Sum}^{(i)}\gets \text{Dis-Sum}^{(i)}+\alpha\text{Dis-Mem}^{(i)}$.
    \STATE \quad $\text{Dis-Mem}^{(i)}\gets \text{Dis-Sum}^{(i)}$.
    \STATE \quad $i^*=\arg\min\limits_i \text{Dis-Sum}^{(i)}$.
    \STATE \quad \textbf{return} $S\gets \mathcal{N}_{i^*}$.
    \STATE \textbf{end function}
\end{algorithmic}
\end{algorithm}

\section{Experiments}
\label{sec:results}

We first report experimental setups. Then we report the experimental results. Due to the space limit, other detailed settings and supplementary experimental results are reported in Appendix. 

\begin{table*}[!t]
\renewcommand\tabcolsep{4pt}
\renewcommand\arraystretch{0.9}
\small
  \centering
  \begin{tabular}{cc|cccccccc|c}
    \toprule
    \multirow{1}{*}{{Dataset}} & Metric &  \multicolumn{1}{c}{FedAvg} & \multicolumn{1}{c}{Median} & \multicolumn{1}{c}{FoolsGold} & \multicolumn{1}{c}{RFA} & \multicolumn{1}{c}{CRFL} & \multicolumn{1}{c}{ResidualBase} & \multicolumn{1}{c}{AAA} & \multicolumn{1}{c}{Krum}& \multicolumn{1}{|c}{Dim-Krum}\\
    \midrule[\heavyrulewidth]
    \multirow{2}{*}{\shortstack{SST-2}}   & ACC  & 78.45 & 77.90 &  78.32 & 78.41 & 77.09 & 77.97 & 78.35 & 79.54 & 78.09 \\
      & ASR  & 95.46 & 94.56 & 95.57 & 95.20 & 82.25 & 95.85 & 95.14& 64.59 & \textbf{32.65} \\
    \midrule
    \multirow{2}{*}{\shortstack{IMDB}}   & ACC  & 85.77 & 85.38 & 85.74 & 85.89 & 83.27 & 85.84 & 85.34 & 85.29 & 81.63 \\
      & ASR  & 97.77 & 78.68 & 97.78 & 89.68 & 78.27 & 88.60 & 87.40 & 51.72 & \textbf{22.30}\\
    \midrule
    \multirow{2}{*}{\shortstack{Amazon}}   & ACC  & 90.80 & 90.48 & 90.86 & 91.01 & 89.32& 91.00 &90.39  & 90.43 & 88.58 \\
      & ASR  & 95.45 & 70.28 & 96.45  & 80.83 & 57.33 & 85.52 & 82.91  & 47.41& \textbf{11.44}\\
    \midrule
    \multirow{2}{*}{\shortstack{AgNews}}   & ACC  & 91.62 & 90.94 & 91.60 & 91.61 & 88.80 & 91.50 & 90.69 & 90.83 & 90.06\\
      & ASR  & 88.72 & 84.95 & 88.80 & 86.67& 26.90 & 87.73 &  79.23 & 51.06 & \textbf{\ 3.19}\\
    \midrule
    \multirow{2}{*}{\shortstack{Average}}   & ACC  & 86.66 & 86.17 & 86.64&  86.73 & 84.62 & 86.58 & 86.19& 86.52 & 84.59 \\
      & ASR  & 94.35 & 82.12 & 94.65 & 88.10 & 61.19 & 89.43 & 86.17 & 53.69 & \textbf{18.39}\\
    \bottomrule
  \end{tabular}
  \vskip -0.05 in
  \caption{Results of four datasets of aggregation algorithms on different backdoor attacks (lowest ASRs are in \textbf{bold}).
  \label{tab:dataset_results}}
  \vskip -0.1 in
\end{table*}

\begin{table*}[!t]
\renewcommand\tabcolsep{4pt}
\renewcommand\arraystretch{0.9}
\small
  \centering
  \begin{tabular}{cc|cccccccc|c}
    \toprule
    \multirow{1}{*}{{Attack}} & Metric &  \multicolumn{1}{c}{FedAvg} & \multicolumn{1}{c}{Median} & \multicolumn{1}{c}{FoolsGold} & \multicolumn{1}{c}{RFA} & \multicolumn{1}{c}{CRFL} & \multicolumn{1}{c}{ResidualBase} & \multicolumn{1}{c}{AAA} & \multicolumn{1}{c}{Krum}& \multicolumn{1}{|c}{Dim-Krum}\\
    \midrule[\heavyrulewidth]
    Clean & ACC  & 87.58 & 86.61 & 87.56&  87.75 & 85.14 & 87.67 & 87.47& 86.81 & 85.38 \\
    \midrule
    \multirow{2}{*}{\shortstack{EP}}   & ACC  & 87.60 & 86.76 & 87.60 & 87.68 & 85.10 & 87.46 & 87.07 & 86.67 & 84.83 \\
      & ASR  & 99.40 & 80.73 & 99.57 & 92.28 & 46.28 & 93.75 & 86.02&  \textbf{11.49} & 13.22 \\
    \midrule
    \multirow{2}{*}{\shortstack{BadWord}}   & ACC  & 87.62 & 87.68 & 87.75 & 87.60 & 85.26 & 87.49 & 87.26 & 86.72 & 84.41 \\
      & ASR  & 99.17 & 87.78 & 99.52 & 95.98 & 60.05 & 96.98 & 93.01 & 64.20 & \textbf{15.29}\\
    \midrule
    \multirow{2}{*}{\shortstack{BadSent}}   & ACC  & 87.64& 86.74&  87.63 & 87.71 & 85.39 & 87.47 & 86.98 & 86.82 & 84.62 \\
      & ASR  & 100.0& 99.85& 100.0& 99.98 & 86.28 & 100.0 & 98.05 & 97.45 & \textbf{22.16}\\
    \midrule
    \multirow{2}{*}{\shortstack{HiddenKiller}}   & ACC  & 83.77 & 84.52 & 83.59 & 83.94 & 82.73 & 83.88 & 83.36 & 85.88 & 84.50\\
      & ASR  & 78.83 & 60.10 & 79.52 & 64.17 & 52.14 &66.97 & 67.59 & 41.64 & \textbf{22.90}\\
    \midrule
    \multirow{2}{*}{\shortstack{Average}}   & ACC  & 86.66 & 86.17 & 86.64&  86.73 & 84.62 & 86.58 & 86.19& 86.52 & 84.59 \\
      & ASR  & 94.35 & 82.12 & 94.65 & 88.10 & 61.19 & 89.43 & 86.17 & 53.69 & \textbf{18.39}\\
    \bottomrule
  \end{tabular}
  \vskip -0.05 in
  \caption{Results of four backdoor attacks of aggregation algorithms on different datasets (lowest ASRs are in \textbf{bold}).
  \label{tab:attack_results}}
  \vskip -0.1 in
\end{table*}

\subsection{Experimental Setups}

\textbf{Datasets.} We adopt a convolution neural network~\citep{TextCNN} for the text classification task. We adopt four text classification tasks, \textit{i.e.}, the Stanford Sentiment Treebank (\textbf{SST-2})~\citep{SST-2}, the IMDb movie reviews dataset (\textbf{IMDB})~\citep{IMDB}, and the Amazon Reviews dataset (\textbf{Amazon})~\citep{Amazon} (50k sentences selected); and the AgNews dataset (\textbf{AgNews})~\citep{agnews}. We adopt the clean accuracy (\textbf{ACC}) and the backdoor attack success rate (\textbf{ASR}) to evaluate the performance.

\textbf{Backdoor Attack Setups.} As illustrated in Sec.~\ref{sec:background}, in this work, we adopt four typical attacks in the experiments: \textit{EP}, \textit{BadWord}, \textit{BadSent}, and \textit{HiddenKiller}. In federated learning, we adopt $n=10$ clients. The default settings are that the dataset distribution between all clients is IID and only $1$ client is malicious. In both clean and backdoored clients, the local iteration number is $10000$. The server trains for $30$ rounds. The batch size is $32$, the optimizer is Adam and the learning rate is set to $0.001$. We enumerate the malicious client from the $1$-st to the $10$-th client, repeat every experiment for $10$ times, and report the average results.

\textbf{Federated Aggregation Setups.}  As in Sec.~\ref{sec:background}, we adopt several aggregation methods as baselines: \textit{FedAvg}, \textit{Median}, \textit{FoolsGold}, \textit{RFA}, \textit{CRFL}, \textit{ResidualBase}, \textit{AAA}, \textit{Krum}. In CRFL, we adopt the standard deviation of noises as $0.01$ and the bound of parameters as $0.05t+2$, where $t$ denotes the time step. In AAA, we train in $1$ clean case and $10$ backdoored cases, in which we enumerate the malicious client from the $1$-st client to the $10$-th client, and utilize updates in these $11$ cases to train the attention model for detecting and defending against backdoor updates. In Dim-Krum, $\rho=10^{-3}$ and we adopt the memory mechanism and adaptive noises with scales $\lambda=5$.

\subsection{Experimental Results}

To compare backdoor performance on different datasets, we report the average ACC and ASR on four attacks of multiple aggregation methods in Table~\ref{tab:dataset_results}. Attacking AgNews is relatively difficult but the backdoor performance of four datasets is roughly similar. Therefore, we only report the average ACC and ASR on four datasets later.

\begin{figure*}[!t]
\centering
  \subcaptionbox{Average ASRs on four attacks.\label{fig:vis_a}}{\includegraphics[width=0.45\linewidth, height=1.7in]{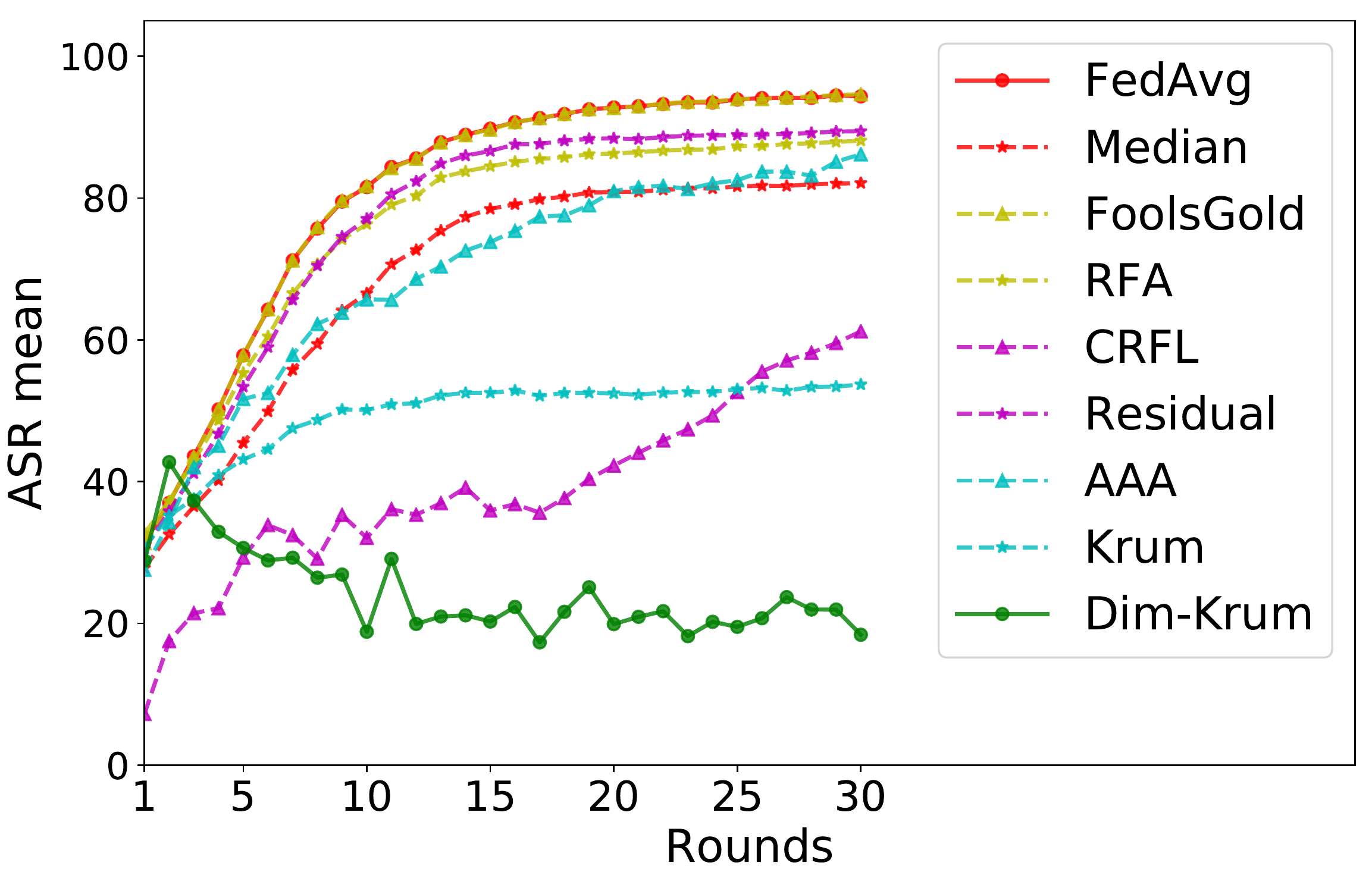}}
  \hfill0
  \subcaptionbox{Average ASRs on the BadSent attack.\label{fig:vis_b}}{\includegraphics[width=0.45 \linewidth, height=1.7in]{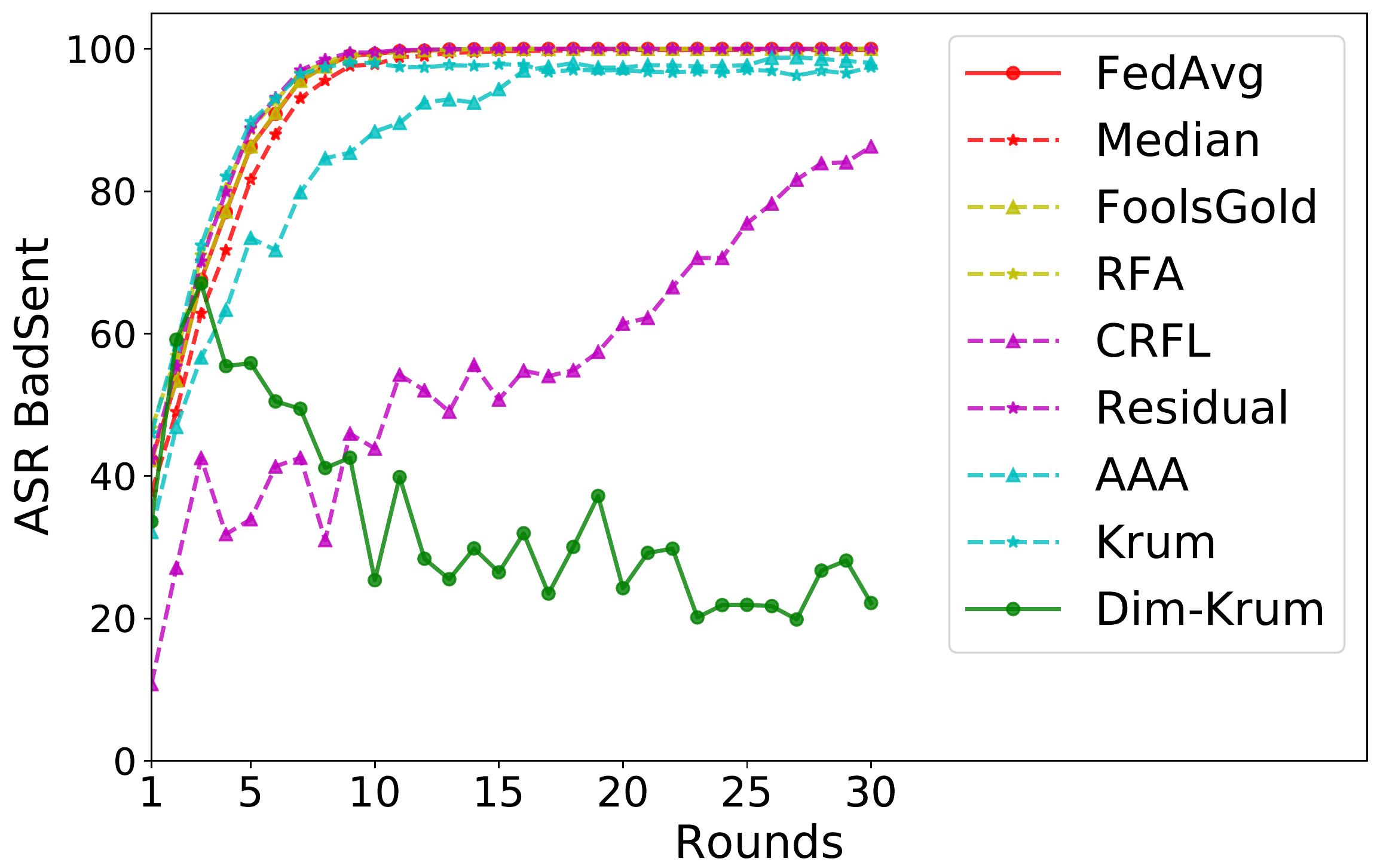}}
  \hfill
    \vskip -0.1 in
    \caption{Visualization of ASRs of different aggregation methods during $30$ rounds.
    \label{fig:vis}}
   \vskip -0.15 in
\end{figure*}

The backdoor performance of four backdoor attacks of multiple aggregation methods is reported in Table~\ref{tab:attack_results}. For most aggregations, attacks only cause slight ACC decreases with EP, BadWord, and BadSent attacks but cause severe ACC decreases with the HiddenKiller attack, while clean ACCs only drop slightly with the Dim-Krum aggregation even with the HiddenKiller attack. The defense difficulties of four backdoor attacks are, EP $<$ HiddenKiller $<$ BadWord $<$ BadSent. Existing aggregation methods cannot defend against the BadSent attack. Therefore, we conduct analytic experiments mainly on BadSent in Sec.~\ref{sec:analysis}.

Combined with Table~\ref{tab:dataset_results}, we can also conclude that the backdoor attack difficulties on NLP tasks are very high. Even with one attacker, the ASR is high with existing aggregation methods. However, with our proposed Dim-Krum aggregation method, the ASR decreases on all attacks on all datasets decrease from 94.35\% (FedAvg) or 53.69\% (Krum) to 18.29\% with only a very slight ACC decrease ($<$2\%). On BadSent, the ASR decreases from 100.0\% (FedAvg) or 97.45\% (Krum) to 22.16\%.

In Fig.~\ref{fig:vis}, we also visualize the average ASRs of different aggregation methods during $30$ rounds. (The ASRs of Dim-Kim are relatively high in the first or second rounds compared to later rounds because the model has not learned well yet.) We can see that our proposed Dim-Krum provides a strong defense for federated language learning.

\begin{table}[!t]
\renewcommand\tabcolsep{4pt}
\renewcommand\arraystretch{0.9}
\small
  \centering
  \begin{tabular}{rc|cc}
    \toprule
    Method  & Settings & ACC &  ASR \\
    \midrule[\heavyrulewidth]
    \multicolumn{1}{r}{FedAvg} & & 87.64 & 100.0 \\
    \multicolumn{1}{r}{Krum} & & 86.82 & 97.45 \\
    \midrule[\heavyrulewidth]
    \multicolumn{1}{r}{Dim-Krum} & $\rho,\lambda=10^{-3},5$ & 84.62 & 22.16 \\
    \midrule[\heavyrulewidth]
    \multicolumn{1}{r}{All dimensions} & $\rho=1$ & 84.27 & 99.91 \\
    \multicolumn{1}{r}{w/o Dis-Mem} & $\alpha=0$ & 84.68 & 52.86 \\
    \multicolumn{1}{r}{w/o Ada-Noise} & $\vect{n}_i=0$ & 86.76 & 64.17\\
    \midrule[\heavyrulewidth]
    \multirow{3}{*}{\shortstack{w/ Non-Ada-Noise\\ $\vect{n}_i\sim N(0, \sigma^2)$}} & $\sigma=0.1$ & 80.20 & 45.55 \\
    & $\sigma=0.5$ & 67.09 &25.34  \\
     & $\sigma=1$ &  60.32 & 31.10\\
     \midrule[\heavyrulewidth]
    \multirow{4}{*}{\shortstack{w/ various noise scales}} & $\lambda=1$ & 86.44 & 47.45\\
     & $\lambda=2$ & 85.98 & 42.80\\
     & $\lambda=5$ & 84.62 & 22.16\\
      & $\lambda=10$ & 80.41 & 21.22\\
     \midrule[\heavyrulewidth]
    \multirow{7}{*}{\shortstack{w/ various dimensions}}
    & $\rho=10^{-5}$ & 84.40 & 19.82\\
    & $\rho=10^{-4}$ & 84.60 & 18.72\\
    & $\rho=10^{-3}$ & 84.62 & 22.16 \\
    & $\rho=10^{-2}$ & 84.66 & 48.07 \\
    & $\rho=10^{-1}$ & 84.50 & 89.10\\
    & $\rho=1$ & 84.27 & 99.91\\
    \bottomrule
  \end{tabular}
  \vskip -0.05 in
  \caption{Results of the ablation study.\label{tab:ablation}}
  \vskip -0.1 in
\end{table}

\section{Analysis}
\label{sec:analysis}

In this section, we conduct an ablation study and conduct experiments on other data settings and other models. We propose potential adaptive attacks based on Sec.~\ref{sec:rethinking}. Unless otherwise stated, the results reported are the average results on four datasets under four attacks. Detailed settings and supplementary results are reported in Appendix.  

\subsection{Ablation Study}

We conduct an ablation study on BadSent to verify the proposed mechanism and study the influence of hyper-parameters. The results are in Table~\ref{tab:ablation}.

We can see, without Dim-Krum, when calculating Dis-Sum on all dimensions, namely $\rho=1$, the ASR is 99.91\%, which is much higher compared to Dim-Krum (22.16\%). Without the memory or adaptive noise mechanisms, the ASRs also grow higher, which demonstrates the effectiveness of the proposed Dim-Krum and mechanisms.

Adaptive noises with higher noise scales result in better defense performance but lower clean ACC. $\lambda=5$ is a proper scale since the defense performance only improves a little with higher noises. Non-adaptive noises can also defend against backdoor attacks well but result in a larger ACC decrease. Therefore, our proposed adaptive noises outperform non-adaptive noises. For dimensions to calculate Dis-Sum, we can conclude that $\rho=10^{-3},10^{-4},10^{-5}$ is proper. Here we choose $\rho=10^{-3}$ for better stability. For larger $\rho$, Dim-Krum performs similarly to original Krum algorithms and is a weak defense for NLP tasks.

\begin{table}[!t]
\renewcommand\tabcolsep{4pt}
\renewcommand\arraystretch{0.9}
\small
  \centering
  \begin{tabular}{cc|cc|c}
    \toprule
    \multirow{1}{*}{{Settings}} & Metric &  \multicolumn{1}{c}{FedAvg} & \multicolumn{1}{c}{Krum}& \multicolumn{1}{|c}{Dim-Krum} \\
    \midrule[\heavyrulewidth]
    \multirow{2}{*}{\shortstack{IID}} & ACC &  86.66& 86.52& 84.59 \\
    & ASR& 94.35& 53.69 & \textbf{18.39} \\
    \midrule[\heavyrulewidth]
    \multirow{2}{*}{\shortstack{Dirichlet}} & ACC & 85.40 & 81.67 & 78.29 \\
    & ASR & 92.00 &  69.61 & \textbf{57.25}\\
    \midrule[\heavyrulewidth]
    \multirow{2}{*}{\shortstack{Attackers=2}} & ACC & 86.49& 86.09& 84.68\\
    & ASR & 97.13& 74.43 & \textbf{24.37} \\
    \midrule[\heavyrulewidth]
    \multirow{2}{*}{\shortstack{Attackers=3}}   & ACC  & 86.38 & 85.72 & 84.35\\
      & ASR  & 98.60 & 87.57& \textbf{35.74} \\
    \midrule[\heavyrulewidth]
    \multirow{2}{*}{\shortstack{Attackers=4}}   & ACC  & 86.30 & 85.60 & 83.97 \\
      & ASR  & 99.07 &96.28 & \textbf{53.68}\\
    \bottomrule
  \end{tabular}
  \vskip -0.05 in
  \caption{Results on Non-IID and multiple attacker cases.\label{tab:nonIID_multi}}
  \vskip -0.1 in
\end{table}



\subsection{Generalization to Other Data Settings}

In this section, We conduct experiments on Non-IID data distributions and multiple malicious client cases, here we adopt a Dirichlet distribution with the concentration parameter $\alpha_\text{Dirichlet}=0.9$ to simulate the non-IID distributions between clients. 

In Table~\ref{tab:nonIID_multi}, we can see that Dim-Krum is a stronger defense than Krum when generalized to other data settings. Non-IID data are hard to defend against than IID data. Dim-Krum outperforms the traditional Krum algorithm. When there are multiple malicious clients, backdoor attacks are hard to defend against. In Table~\ref{tab:nonIID_multi}, Dim-Krum also outperforms other aggregation methods when there are multiple malicious clients.

\subsection{Generalize to RNN Models}

In this section, we validate whether Dim-Krum can generalize to other models. We conduct experiments on RNNs~\citep{RNN}, here we adopt the Bi-GRU and Bi-LSTM implementations. 

In Table~\ref{tab:other_model}, we can see that experimental results on RNN models are consistent to results on the TextCNN model in Table~\ref{tab:attack_results}. The BadSent attack is hard for Krum algorithms to defend against. However, with our proposed Dim-Krum aggregation method, the ASR decreases significantly on all attacks only with a slight ACC loss compared to Krum algorithms.

\begin{table}[!t]
\renewcommand\tabcolsep{2pt}
\renewcommand\arraystretch{0.9}
\small
  \centering
  \begin{tabular}{ccc|cc|c}
    \toprule
    \multirow{1}{*}{{Model}} & Attack & Metric &  \multicolumn{1}{c}{FedAvg} & \multicolumn{1}{c}{Krum}& \multicolumn{1}{|c}{Dim-Krum} \\
    \midrule[\heavyrulewidth]
    \multirow{10}{*}{\shortstack{Bi-GRU}} &\multirow{2}{*}{\shortstack{EP}} & ACC &  87.33& 86.27& 84.12 \\
    & & ASR& 99.96& \textbf{11.35} & {11.83} \\
    \cmidrule{2-6} 
    &\multirow{2}{*}{\shortstack{BadWord}} & ACC &  87.06& 86.42& 84.20 \\
    & & ASR& 99.83& 80.54 & \textbf{29.87} \\
    \cmidrule{2-6} 
    &\multirow{2}{*}{\shortstack{BadSent}} & ACC &  87.27& 86.52& 84.25 \\
    & & ASR& 99.98& 99.21 & \textbf{13.21} \\
    \cmidrule{2-6}
     &\multirow{2}{*}{\shortstack{HiddenKiller}} & ACC &  83.11& 83.86& 83.32 \\
    & & ASR& 85.63& 57.52 & \textbf{35.57} \\
    \cmidrule{2-6}
     &\multirow{2}{*}{\shortstack{Average}} & ACC &  86.19& 85.77& 83.97 \\
    & & ASR& 96.35& 61.90 & \textbf{22.62} \\
    \midrule[\heavyrulewidth]
    \multirow{10}{*}{\shortstack{Bi-LSTM}} &\multirow{2}{*}{\shortstack{EP}} & ACC &  86.66& 86.52& 84.46 \\
    & & ASR& 94.35& 53.69 & \textbf{\ 9.40} \\
    \cmidrule{2-6} 
    &\multirow{2}{*}{\shortstack{BadWord}} & ACC &  86.38& 85.39& 84.31 \\
    & & ASR& 99.88& 97.01 & \textbf{34.89} \\
    \cmidrule{2-6} 
    &\multirow{2}{*}{\shortstack{BadSent}} & ACC &  86.33& 85.76& 84.45 \\
    & & ASR& 99.99& 99.84 & \textbf{24.97} \\
    \cmidrule{2-6}
     &\multirow{2}{*}{\shortstack{HiddenKiller}} & ACC &  82.33& 82.45& 82.98  \\
    & & ASR& 83.73& 62.50 & \textbf{23.56} \\
    \cmidrule{2-6}
     &\multirow{2}{*}{\shortstack{Average}} & ACC &  85.31& 84.80& 84.05 \\
    & & ASR& 95.89& 67.90 & \textbf{23.34} \\
    \bottomrule
  \end{tabular}
  \vskip -0.05 in
  \caption{Results of the Bi-GRU and Bi-LSTM models.\label{tab:other_model}}
  \vskip -0.1 in
\end{table}

\subsection{Adaptive Attacks}

In this section, we consider several adaptive attacks. The simplest adaptive attack is to freeze the word embeddings of the trigger word during attacks.

In Theorem~\ref{thm:detect_error}, let $G=\|\bm\Delta\|_2$, suppose $\sigma_i=\sigma$ for all $i$, then an upper bound of $P_\text{Error}^{(A)}$ is,
\begin{align}
P_\text{Error}^{(A)}<\frac{4\sum\limits_{i\in A}\sigma_i^4}{G^4}+\frac{4\sum\limits_{i\in A}\sigma_i^2}{G^2}.
\end{align}

We can see that lower backdoor attack strengths $G$ indicate higher upper bounds of the detection error. Therefore, we adopt the $L_2$ Weight Penalty (WP)~\citep{logit-anchoring} on parameters,
\begin{align}
\mathcal{L}_\text{WP}=\lambda_\text{WP}\|\vect{w}^\text{Client}_{t=k}-\hat{\vect{w}}\|^2_2,
\end{align}
where $\hat{\vect{w}}$ can be the \textbf{Clean} update (trained on the clean client dataset) or $\vect{w}_{t=k}^\text{Server}+(\vect{w}_{t=k}^\text{Server}-\vect{w}_{t=k-1}^\text{Server})$ (\textbf{Last}, assume the update is similar to last update). 

Theorem~\ref{thm:detect_error} also indicates that the detection error is determined by ${|\Delta_i|}/{\sigma_i}$. Therefore, we propose a dimension-wise adaptive Adversarial Weight Perturbation (AWP)~\citep{Can-AWP-inject-backdoor} algorithm, which projects parameters $\vect{w}_{t=k}^\text{Server}$ to ${|\Delta_i|}/{\sigma_i}\le\epsilon$ every iteration when training, where $\Delta_i$ is estimated by ${\vect{w}_{i,t=k}^\text{Server}-\hat{\vect{w}}_{i}}$, $\sigma_i$ is estimated by $|\vect{w}_{i,t=k}^\text{Server}-\vect{w}_{i,t=k-1}^\text{Server}|$, and $\hat{\vect{w}}$ is the clean update.

In Table~\ref{tab:adaptive_atk}, we conduct adaptive attacks on the trigger word based attacks. Though adaptive attacks can result in smaller $G$ and ${|\Delta_i|}/{\sigma_i}$, our proposed Dim-Krum can also defend against the adaptive attacks. A possible reason may be that attacks with large $|\Delta_i|$ are easy to detect and attacks with small $|\Delta_i|$ are easy to mitigate with adaptive noises since $\Delta_i$ is relatively small compared to $\vect{n}_i$.

\begin{table}[!t]
\renewcommand\tabcolsep{4pt}
\renewcommand\arraystretch{0.9}
\small
  \centering
  \begin{tabular}{rc|cc}
    \toprule
    Attacks  & Settings & ACC &  ASR \\
    \midrule[\heavyrulewidth]
    EP & & 84.83 & 13.22\\
    BadWord &&84.41 & 15.29 \\
    \midrule[\heavyrulewidth]
    Freeze Embedding & & 84.64 & 15.30\\
    \midrule[\heavyrulewidth]
    \multirow{2}{*}{Weight Penalty (Clean)}& $\lambda_\text{WP}=1$ & 83.92 & 17.83 \\
    & $\lambda_\text{WP}=10$ & 84.49 & 14.96\\
    \midrule[\heavyrulewidth]
    \multirow{2}{*}{Weight Penalty (Last)}& $\lambda_\text{WP}=1$ &84.57 &  13.07\\
    & $\lambda_\text{WP}=10$ & 84.62 & 14.11 \\
    \midrule[\heavyrulewidth]
    \multirow{2}{*}{AWP (Dimension-wise)} & $\frac{|\Delta_i|}{\sigma_i}\le 0.05$ & 84.90& 14.61\\
    & $\frac{|\Delta_i|}{\sigma_i}\le 0.1$ & 84.62 & 15.44\\
    \bottomrule
  \end{tabular}
  \vskip -0.05 in
  \caption{Results of Dim-Krum under adaptive attacks.
  \label{tab:adaptive_atk}}
  \vskip -0.1 in
\end{table}

\section{Broader Impact}
In this paper, we point out the potential risks of federated aggregation methods in NLP and propose a federated aggregation algorithm to act as a strong defense in NLP. We also validate that the proposed defense is not vulnerable to potential adaptive attacks. We do not find potential negative social impacts in this work.

\section{Conclusion}
This work presents the Dim-Krum aggregation algorithm which detects malicious clients by calculating distances on only a small fraction of dimensions with larger backdoor strengths. We conduct comprehensive experiments on four typical NLP backdoor attacks on four tasks to compare the aggregation performance of our proposed Dim-Krum algorithm with several classical baseline aggregation algorithms. Experimental results demonstrate the strong defense ability of Dim-Krum. Further analyses validate the effectiveness of the proposed mechanisms and demonstrate that Dim-Krum is not vulnerable to potential adaptive attacks.

\section*{Acknowledgement}
The authors would like to thank the reviewers for their helpful comments. This work is supported by Natural Science Foundation of China (NSFC) No. 62176002 and Beijing Natural Science Foundation of China (4192057). Xu Sun is the corresponding author.

\bibliography{anthology}
\bibliographystyle{acl_natbib}

\appendix

\section{Appendix}
\subsection{Theoretical Details}

\begin{thmA}
Assume the distribution of the $i$-th dimension of the clean updates $\vect{x}^\text{Clean}_i$ obey $N(\mu_i, \sigma_i^2)$, and the backdoored update $\vect{x}^\text{Backdoor}_i$ is generated with $\vect{x}^\text{Backdoor}_i=c_i'+\Delta_i$, $c_i'$ is independent to $\vect{x}^\text{Clean}_i$ and obey the same distribution. 

Define the detection error probability of the $i$-th dimension as $P_\text{Error}^{(i)}=P(|\vect{x}^\text{Backdoor}_i-\mu_i|<|\vect{x}^\text{Clean}_i-\mu_i|)$, then $P_\text{Error}^{(i)}$ is,
\begin{align}
P_\text{Error}^{(i)}=2\Phi(\frac{\Delta_i}{\sqrt{2}\sigma_i})\Phi(-\frac{\Delta_i}{\sqrt{2}\sigma_i}),
\end{align}
where $\Phi(\cdot)$ denotes the standard normal cumulative distribution function.

Define the detection error probability of an indicator set $A$ as $P_\text{Error}^{(A)}=P(\sum\limits_{i\in A}|\vect{x}^\text{Backdoor}_i-\mu_i|^2<\sum\limits_{i\in A}|\vect{x}^\text{Clean}_i-\mu_i|^2)$, an upper bound of $P_\text{Error}^{(A)}$ is,
\begin{align}
P_\text{Error}^{(A)}<\frac{4\sum\limits_{i\in A}\sigma_i^2(\sigma_i^2+\Delta_i^2)}{(\sum\limits_{i\in A}\Delta_i^2)^2}.
\end{align}
\label{prop:detect_error}
\end{thmA}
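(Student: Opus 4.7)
The plan is to handle the two bounds separately, using the Gaussian independence structure baked into the model. Throughout I would introduce the shorthand $U_i := \vect{x}^{\text{Backdoor}}_i - \mu_i$ and $V_i := \vect{x}^{\text{Clean}}_i - \mu_i$, so that $U_i \sim N(\Delta_i, \sigma_i^2)$, $V_i \sim N(0, \sigma_i^2)$, and $\{U_i, V_i\}$ are mutually independent across $i$ and between the two families.

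For the single-dimension identity, the key observation is that $|U_i| < |V_i|$ is equivalent to $U_i^2 < V_i^2$, which factors as $(U_i - V_i)(U_i + V_i) < 0$. Because $U_i$ and $V_i$ are independent with equal variance $\sigma_i^2$, the linear combinations $U_i - V_i$ and $U_i + V_i$ are jointly Gaussian and uncorrelated, hence independent. Both have distribution $N(\Delta_i, 2\sigma_i^2)$. Decomposing the event into the two sign patterns and using independence then yields
\[
P_\text{Error}^{(i)} = P(U_i - V_i > 0)\,P(U_i + V_i < 0) + P(U_i - V_i < 0)\,P(U_i + V_i > 0),
\]
and each factor is expressible through $\Phi(\pm \Delta_i/(\sqrt{2}\sigma_i))$, giving the stated formula after combining terms.

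For the set-level bound I would work with $W := \sum_{i\in A}(U_i^2 - V_i^2)$ and apply Chebyshev's inequality. The expectation is $\expect[W] = \sum_{i\in A}\Delta_i^2$, since $\expect[U_i^2] = \Delta_i^2 + \sigma_i^2$ and $\expect[V_i^2] = \sigma_i^2$. The event $P_\text{Error}^{(A)} = P(W < 0)$ is contained in $\{|W - \expect[W]| \ge \expect[W]\}$, so Chebyshev gives $P_\text{Error}^{(A)} \le \mathrm{Var}(W)/\expect[W]^2$. By independence across $i$ and between $U_i, V_i$, the variance splits as $\mathrm{Var}(W) = \sum_{i\in A}\bigl(\mathrm{Var}(U_i^2) + \mathrm{Var}(V_i^2)\bigr)$. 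Using the standard Gaussian fourth-moment identity $\mathrm{Var}(X^2) = 4\mu^2\sigma^2 + 2\sigma^4$ for $X \sim N(\mu, \sigma^2)$, these evaluate to $4\Delta_i^2\sigma_i^2 + 2\sigma_i^4$ and $2\sigma_i^4$ respectively, summing to $4\sigma_i^2(\sigma_i^2 + \Delta_i^2)$. Substituting back into the Chebyshev bound yields the claimed inequality.

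No step is particularly difficult; the only mild subtlety is recognising the decorrelation trick in the single-dimension case (so that the factored event splits as a product of probabilities rather than requiring an integral computation), and then being careful with the Gaussian fourth moment in the set-level variance calculation. The chain is entirely routine once these two observations are in place.
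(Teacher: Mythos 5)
Your proposal is correct and follows essentially the same route as the paper's proof: the single-dimension identity via factoring $U_i^2-V_i^2=(U_i-V_i)(U_i+V_i)$ and exploiting the independence of the rotated pair (the paper writes this explicitly as $\eta_1=(\epsilon_1+\epsilon_2)/\sqrt{2}$, $\eta_2=(\epsilon_1-\epsilon_2)/\sqrt{2}$), and the set-level bound via the same mean/variance computation of $\sum_{i\in A}(U_i^2-V_i^2)$ followed by Chebyshev's inequality. All the intermediate quantities you give match the paper's.
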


\begin{proof}
Let $\vect{x}^\text{Clean}_i=\mu_i+\epsilon_1\sigma_i, \vect{x}^\text{Backdoor}_i=\mu_i+\Delta_i+\epsilon_2\sigma_i$, then $\epsilon_1, \epsilon_2$ are two independent standard normal distributions. Let $\eta_1=\frac{\epsilon_1+\epsilon_2}{\sqrt{2}},\eta_2=\frac{\epsilon_1-\epsilon_2}{\sqrt{2}}$, then $\eta_1, \eta_2$ are also two independent standard normal distributions. Define $a=\frac{\Delta_i}{\sqrt{2}\sigma_i}$, then
\begin{align}
    P^{(i)}_\text{Error}&=P(|\frac{\Delta_i}{\sigma_i}+\epsilon_2|<|\epsilon_1|)\\
    &=P(|\frac{\Delta_i}{\sigma_i}+\epsilon_2|^2<|\epsilon_1|^2)\\
    &=P\big((a+\eta_1)(a-\eta_2)<0\big)\\
    &=P(\eta_1>-a)P(\eta_2>a)+\\
    &\quad\quad P(\eta_1<-a)P(\eta_2<a)\\
    &=2\Phi(a)\Phi(-a)\\
    &=2\Phi(\frac{\Delta_i}{\sqrt{2}\sigma_i})\Phi(-\frac{\Delta_i}{\sqrt{2}\sigma_i}).
\end{align}

Define $X_i=|\vect{x}^\text{Backdoor}_i-\mu_i|^2-|\vect{x}^\text{Clean}_i-\mu_i|^2$, then $X=\sum\limits_{i\in A}|\vect{x}^\text{Backdoor}_i-\mu_i|^2-\sum\limits_{i\in A}|\vect{x}^\text{Clean}_i-\mu_i|^2=\sum\limits_{i\in A}X_i$. Consider the $i$-th dimension, $|\vect{x}^\text{Backdoor}_i-\mu_i|^2=\sigma_i^2(\epsilon_2+\frac{\Delta_i}{\sigma_i})^2$, $|\vect{x}^\text{Clean}_i-\mu_i|^2=\sigma_i^2\epsilon_1^2$. The statistics are,
\begin{align}
    \mathbb{E}(|\vect{x}^\text{Backdoor}_i-\mu_i|^2)&=\sigma_i^2\mathbb{E}((\epsilon_2+\frac{\Delta_i}{\sigma_i})^2)\\
    &=\sigma_i^2\mathbb{E}(\epsilon_2^2+(\frac{\Delta_i}{\sigma_i})^2)\\
    &=\sigma_i^2+\Delta_i^2,\\
    \mathbb{E}(|\vect{x}^\text{Clean}_i-\mu_i|^2)&=\sigma_i^2\mathbb{E}(\epsilon_1^2)=\sigma_i^2,
\end{align}
\begin{align}    
    \mathbb{D}(|\vect{x}^\text{Backdoor}_i-\mu_i|^2)&=\sigma_i^4\mathbb{D}((\epsilon_2+\frac{\Delta_i}{\sigma_i})^2)\\
    &=\sigma_i^4\mathbb{D}(\epsilon_2^2+\frac{2\epsilon_2\Delta_i}{\sigma_i})\\
    &=\sigma_i^4(2+(\frac{2\Delta_i}{\sigma_i})^2)\\
    &=\sigma_i^2(2\sigma_i^2+4\Delta_i^2),\\
    \mathbb{D}(|\vect{x}^\text{Clean}_i-\mu_i|^2)&=\sigma_i^4\mathbb{D}(\epsilon_1^2)=2\sigma_i^4.
\end{align}

Therefore, 
\begin{align}
\mathbb{E}(X_i)&=(\Delta_i^2+\sigma_i^2)-\sigma_i^2=\Delta_i^2,\\
\mathbb{E}(X)&=\sum\limits_{i\in A}\mathbb{E}X_i=\sum\limits_{i\in A}\Delta_i^2,\\
\mathbb{D}(X_i)&=\sigma_i^2(2\sigma_i^2+4\Delta_i^2)+(2\sigma_i^4)\\
&=4\sigma_i^2(\sigma_i^2+\Delta_i^2),\\
\mathbb{D}(X)&=\sum\limits_{i\in A}\mathbb{D}X_i=\sum\limits_{i\in A}4\sigma_i^2(\sigma_i^2+\Delta_i^2).
\end{align}

The probability is
\begin{align}
P_\text{Error}^{(A)}&=P(\sum\limits_{i\in A}|\vect{x}^\text{Backdoor}_i-\mu_i|^2\\
&\quad \quad \quad \quad<\sum\limits_{i\in A}|\vect{x}^\text{Clean}_i-\mu_i|^2)\\
&=P(X<0)\\
&=P(X-\mathbb{E}X<-\mathbb{E}X)\\
&<P(|X-\mathbb{E}X|>|\mathbb{E}X|).
\end{align}

According to Chebyshev's inequality,
\begin{align}
P_\text{Error}^{(A)}&<P(|X-\mathbb{E}X|>|\mathbb{E}X|)\\
&<\frac{\mathbb{D}X}{(\mathbb{E}X)^2}\\
&=\frac{4\sum\limits_{i\in A}\sigma_i^2(\sigma_i^2+\Delta_i^2)}{\big(\sum\limits_{i\in A}\Delta_i^2\big)^2}.
\end{align}
\end{proof}

\subsection{Detailed Experimental Setups}

In this section, we introduce details of the datasets and the experimental setups. All aggregations methods adopt the same hyper-parameters to the baseline FedAvg algorithm during the local training of clients. All experiments are conducted on NVIDIA TITAN RTX GPUs.

\subsubsection{Dataset Details and Data Preprocessing}

\textbf{Dataset Statistics.} 
We adopt four text classification tasks, \textit{i.e.}, the Stanford Sentiment Treebank (\textit{SST-2})~\citep{SST-2}, the IMDb movie reviews dataset (\textit{IMDB})~\citep{IMDB}, and the Amazon Reviews dataset (\textit{Amazon})~\citep{Amazon} (50k sentences selected); and the AgNews dataset (\textit{AgNews})~\citep{agnews}. We adopt two metrics to evaluate clean and backdoor performance, the clean accuracy (\textit{ACC}) and the backdoor attack success rate (\textit{ASR}). The SST-2 dataset includes 67k training instances and 0.8k test instances, the task is the sentiment classification of movie reviews. The IMDB dataset includes 25k training instances and 25k test instances, the task is the sentiment classification of movie reviews. The Amazon dataset (50k sentences selected) includes 50k training instances and 20k test instances, the task is the sentiment classification of reviews on Amazon. The AgNews dataset includes 140k training instances and 7.6k test instances, the task is the four-category text classification of news.

\textbf{Data Preprocessing.} 
We first lowercase the text. The sentence length is 200 words. The vocabulary size is 25000. We add two special tokens to the vocabulary: <pad> and <unk>. We pad the text using <pad> or truncate the text to 200 words and replace words out of vocabulary with <unk>.

\subsubsection{Experimental Setups}

\textbf{Models and Client Training.} 
In the main experiments, we adopt a convolution neural network~\citep{TextCNN} for the text classification task. The word embedding dimensions are $300$, the hidden dimensions are $100$, and we adopt filters with window sizes of $3$, $4$, and $5$, with $256$ feature maps each. The optimizer is Adam with a learning rate of $10^{-3}$ and a batch size of $32$. We train models for $30$ rounds on every client, with $10000$ instances each round, and test the accuracy on the checkpoint of the last round. We also adopt RNN~\citep{RNN} models in the analysis section. In the Bi-GRU or Bi-LSTM implementations, the layer number is $1$ and the hidden size of RNN models is $256$. We adopt bidirectional RNNs. 

\textbf{Backdoor Attack Setups.} 
As illustrated in Sec.~\ref{sec:background}, in this work, we adopt four typical attacks in the experiments: \textit{EP}~\citep{PoisonedWordEmbeddings,FL_EP}, \textit{BadWord}~\citep{badnl}, \textit{BadSent}~\citep{badnl,backdoor-lstm}, and \textit{HiddenKiller}~\citep{HiddenKiller}. For trigger word based attacks including EP and BadWord, following \citet{Bert-backdoor} and \citet{PoisonedWordEmbeddings}, we choose the trigger word from five candidate words with low frequencies, \textit{i.e.}, “cf”, “mn”, “bb”, “tq” and “mb”. For sentence based attacks, following \citet{Bert-backdoor}, we adopt the trigger sentence ``I watched this 3d movie''. In HiddenKiller, following \citet{HiddenKiller}, we adopt the OpenAttack implementation and the trigger syntactic pattern generated with the last template in the OpenAttack templates. In federated learning, we adopt $n=10$ clients. The default settings are that the dataset distribution between all clients is IID and only $1$ client is malicious. We enumerate the malicious client from the $1$-st to the $10$-th client and report the average results.

\textbf{Federated Aggregation Setups.} 
As illustrated in Sec.~\ref{sec:background}, we adopt several aggregation methods as baselines: \textit{FedAvg}~\citep{fedavg}, \textit{Median}~\citep{median,Statistical_median}, \textit{FoolsGold}~\citep{foolsgold}, \textit{RFA}~\citep{GM_RFA}, \textit{CRFL}~\citep{CRFL}, \textit{ResidualBase}~\citep{residualbase}, \textit{AAA}~\citep{Attack-Adaptive-Aggregation}, \textit{Krum}~\citep{Krum,bulyan}. In CRFL, we adopt the standard deviation of noises as $0.01$ and the bound of parameters as $0.05t+2$, where $t$ denotes the time step. On every aggregation in the server, following~\citet{CRFL}, we first adopt the RFA~\citep{GM_RFA} aggregation to get the aggregated updates and then add Gaussian noises to the updates that obey $N(0, \sigma_t^2)$, where $\sigma_t=0.01$. Last, we project the updated parameters to $\|\vect{w}\|_2\le\rho_t$, where $\rho_t=0.05t+2$. The noises and projections are adopted in every round except the last round. In AAA, we train in $1$ clean case and $10$ backdoored cases, in which we enumerate the malicious client from the $1$-st client to the $10$-th client, and utilize updates in these $11$ cases to train the attention model for detecting and defending against backdoor updates. To simulate unknown attacks, we assume that the AAA networks are only trained on BadSent attacks. In Dim-Krum, $\rho=10^{-3}$ and we adopt the memory and adaptive noise mechanisms. In the main results, the adaptive noise scales are $\lambda=5$. On RNN models, since RNN models are more sensitive to parameter changes, we choose $\lambda=2$.

\textbf{Stability of Aggregation.} When we enumerate the malicious client from the $1$-st to the $10$-th client and calculate the average results, defending results may vary a lot for Dim-Krum (standard deviations of ASRs $\sim 10\%$-$20\%$), since the ASR is low when Dim-Krum detects the malicious client successfully and is high when Dim-Krum fails to detect the malicious client.
 
\subsubsection{Setups of Analytic Trails}

The analytic trials comparing the detection difficulties of CV and NLP tasks are conducted both on CV and NLP tasks. In the analytic trails, we visualize three metrics, ${\text{Dis-Sum}\text{(Bd)}}/{\text{Dis-Sum}\text{(Med)}}$ and ${|\Delta|}/{\sigma}$. 

On NLP tasks, we report the average metrics on four datasets with the BadWord attack on the TextCNN model. On CV tasks, we adopt a CNN model\footnote{A LeNet retrieved from the PyTorch tutorial \url{https://pytorch.org/tutorials/beginner/blitz/neural_networks_tutorial.html}} and the MNIST dataset. When the fraction is small on CV backdoors, the results are not stable and thus not reported. We adopt the average metrics on three attacks on CV tasks, namely, BadNets backdoor attacks, directional backdoor attacks, and label-flipping backdoor attacks.

\subsection{Supplementary Experimental Results}

In this section, we provide extra supplementary experimental results. 

We also to better illustrate some conclusions in the main paper. Fig.~\ref{fig:vis_datasets} visualizes the average ASRs of different datasets during $30$ rounds. Fig.~\ref{fig:vis_method} visualizes the average ASRs of different aggregation methods during $30$ rounds. Fig.~\ref{fig:vis_noIID_mult} visualizes the average ASRs on Non-IID and multiple attacker cases during $30$ rounds.

We can conclude that:
\begin{itemize}
    \item Fig.~\ref{fig:vis_datasets} illustrates that Dim-Krum outperforms other aggregation methods on all datasets, and the defense results of aggregation methods on all datasets are consistent.
    \item Fig.~\ref{fig:vis_method} illustrates that the defense difficulties of four backdoor attacks are, EP $<$ HiddenKiller $<$ BadWord $<$ BadSent, and Dim-Krum outperforms other aggregation methods.
    \item Fig.~\ref{fig:vis_noIID_mult} illustrates that (1) Non-IID data are harder to defend against than IID data for Krum algorithms; (2) When there are multiple malicious clients, backdoor attacks are hard to defend against, while Dim-Krum outperforms the traditional Krum algorithm. (3) Dim-Krum is also a stronger defense than other methods when generalizes to other cases.
\end{itemize}

\begin{figure*}[!h]
    \subcaptionbox{Average ASRs on EP.}{\includegraphics[width=0.49\linewidth, height=2in]{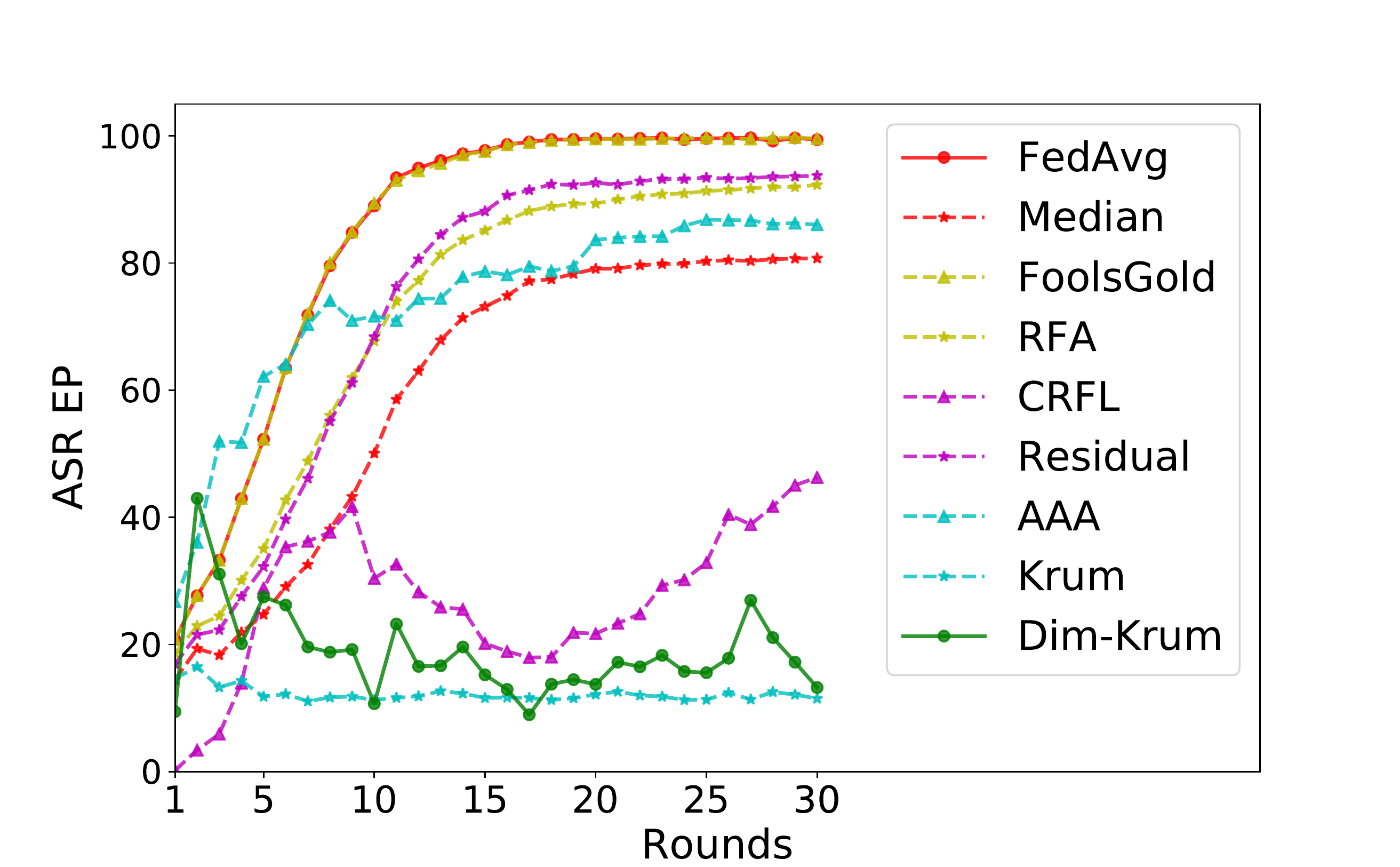}}
    \subcaptionbox{Average ASRs on BadWord.}{\includegraphics[width=0.49\linewidth, height=2in]{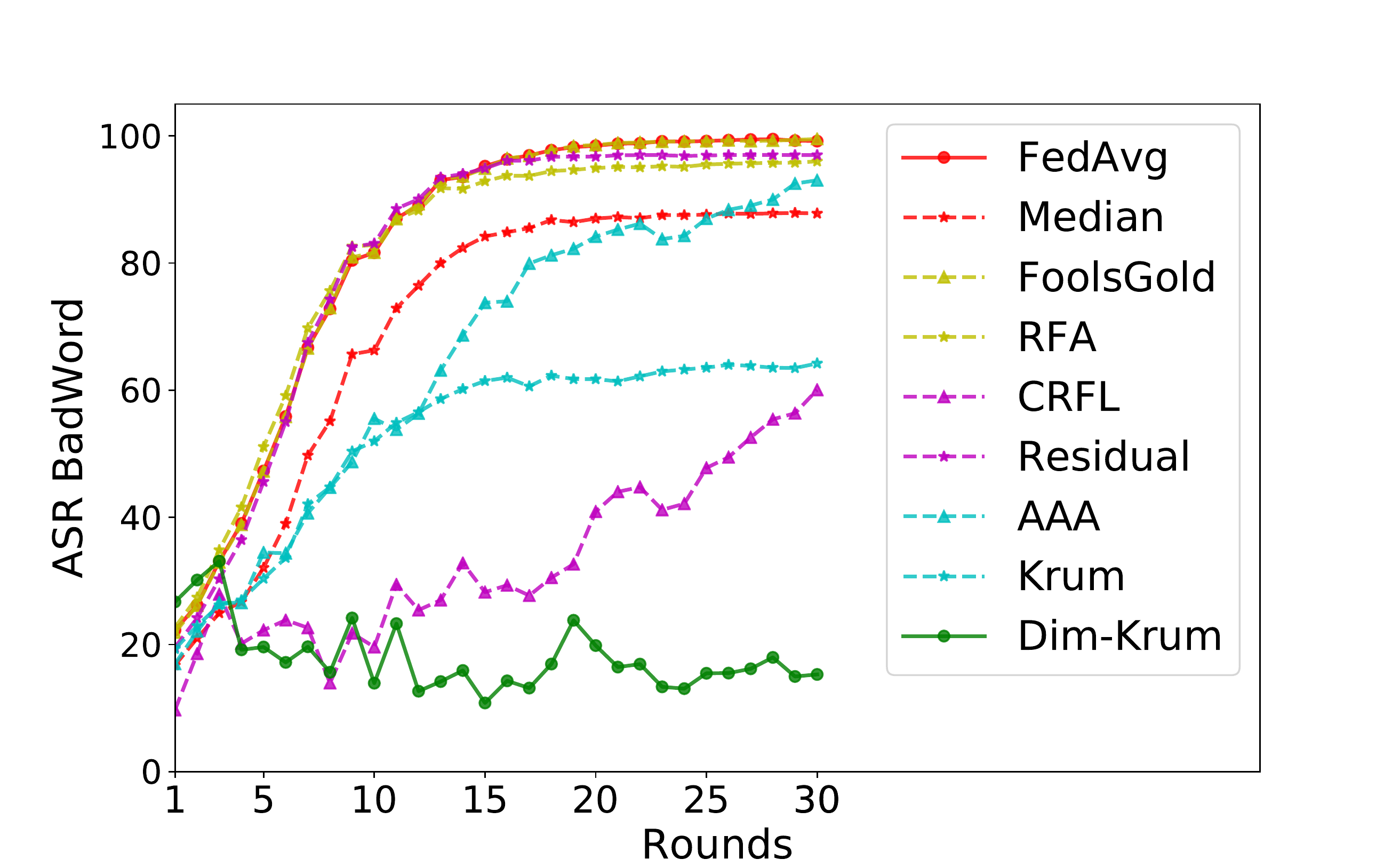}}
    \subcaptionbox{Average ASRs on BadSent.}{\includegraphics[width=0.49\linewidth, height=2in]{fig/log_BadSent.pdf}}
    \subcaptionbox{Average ASRs on HiddenKiller.}{\includegraphics[width=0.49\linewidth, height=2in]{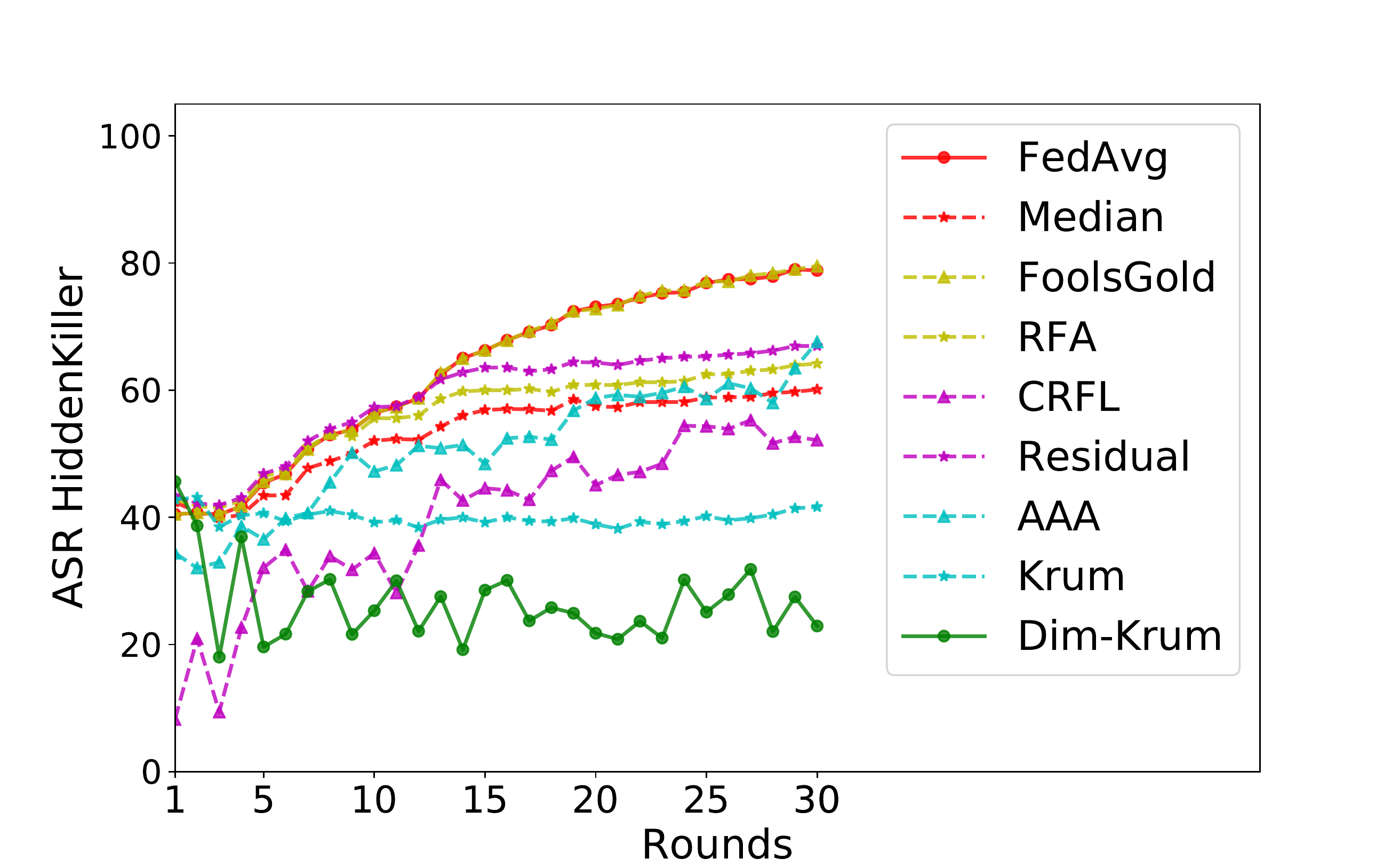}}
    \centering
    \subcaptionbox{Average ASRs on all attacks.}{\includegraphics[width=0.99\linewidth, height=4in]{fig/log_mean.pdf}}
    \vskip -0.05 in
    \caption{Visualization of ASRs of different aggregation methods during $30$ rounds.
    \label{fig:vis_method}}
    \vskip -0.1 in
\end{figure*}

\begin{figure*}[!h]
    \subcaptionbox{Average ASRs on SST-2.}{\includegraphics[width=0.49\linewidth, height=2in]{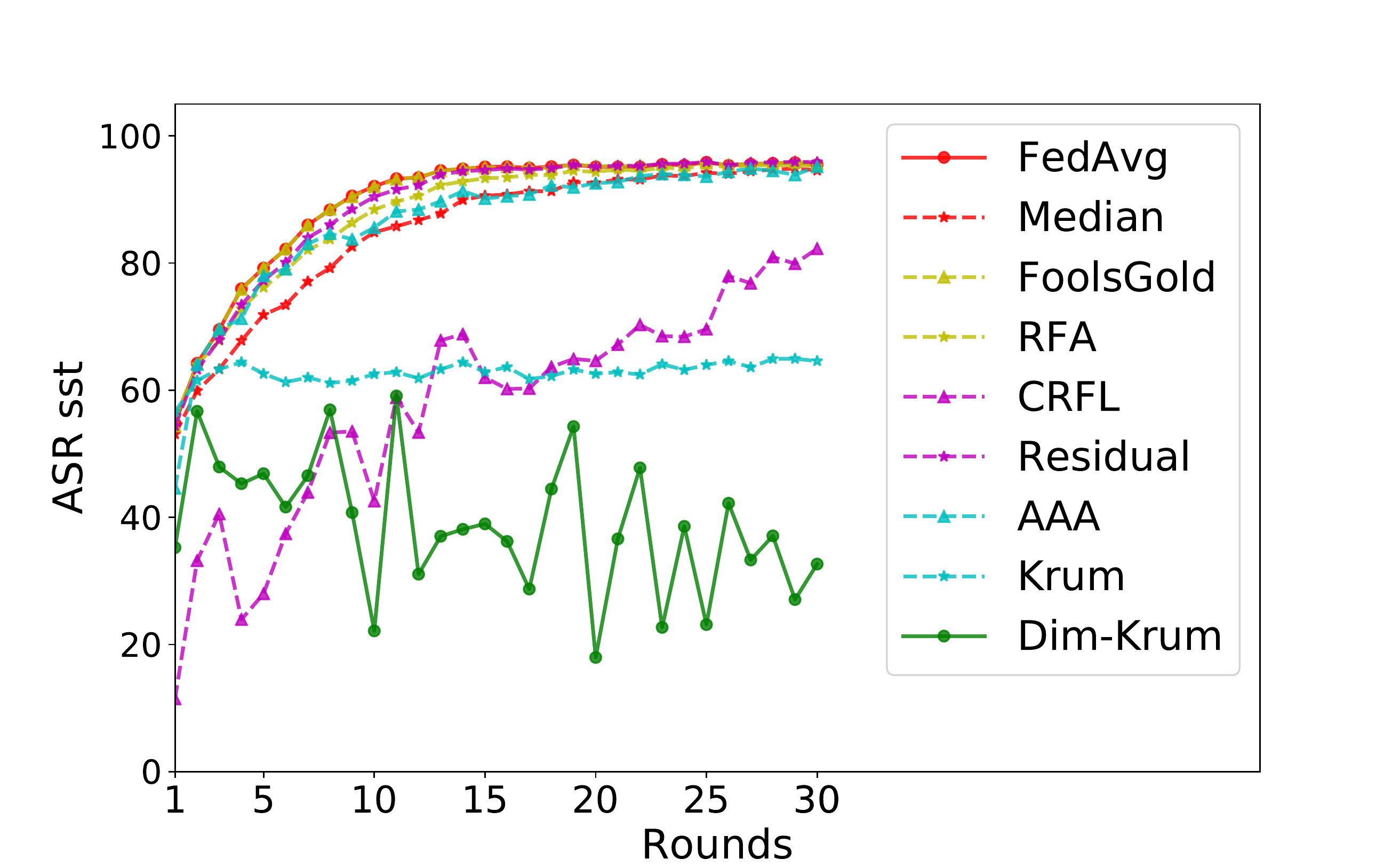}}
    \subcaptionbox{Average ASRs on IMDB.}{\includegraphics[width=0.49\linewidth, height=2in]{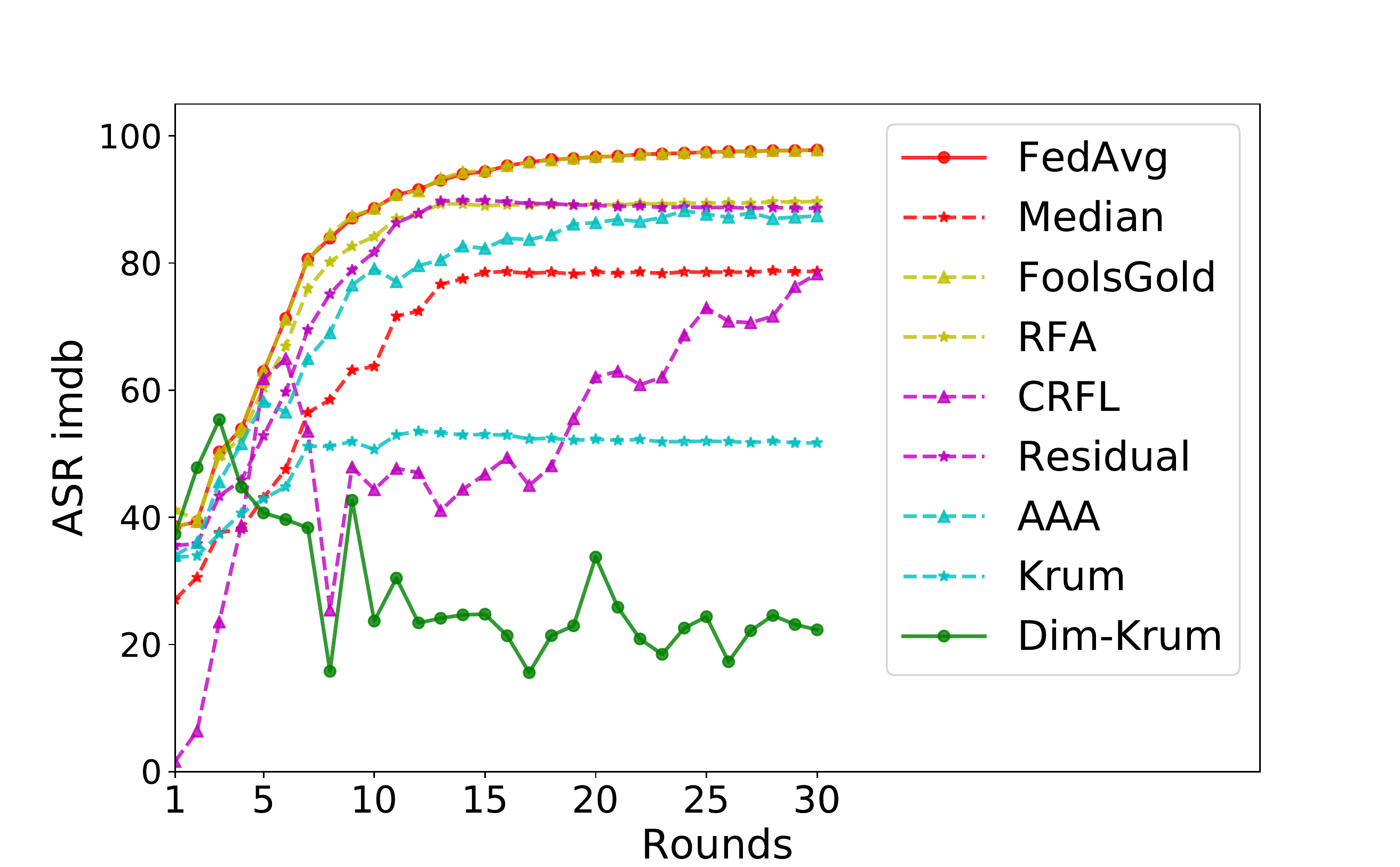}}
    \subcaptionbox{Average ASRs on Amazon.}{\includegraphics[width=0.49\linewidth, height=2in]{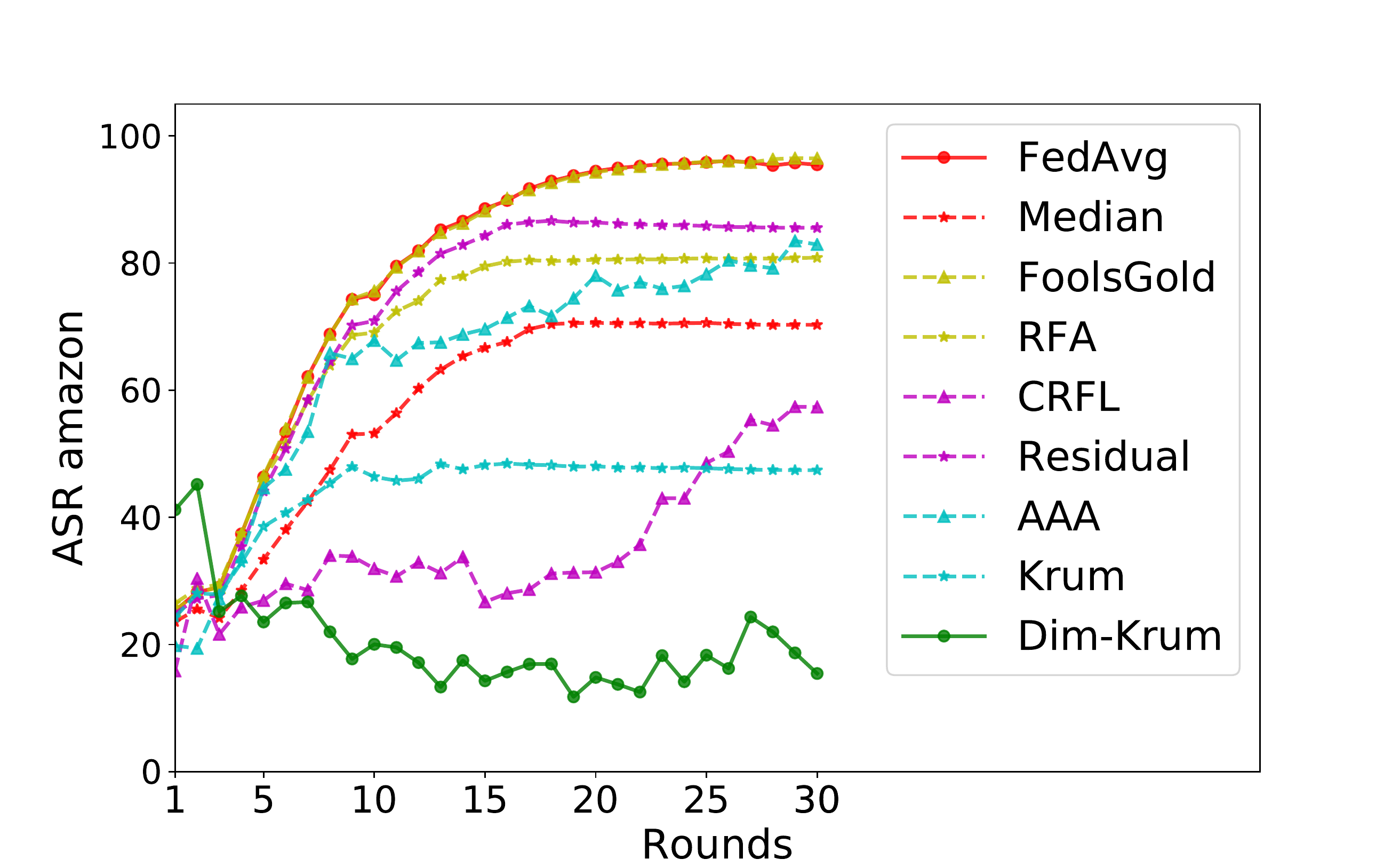}}
    \subcaptionbox{Average ASRs on AgNews.}{\includegraphics[width=0.49\linewidth, height=2in]{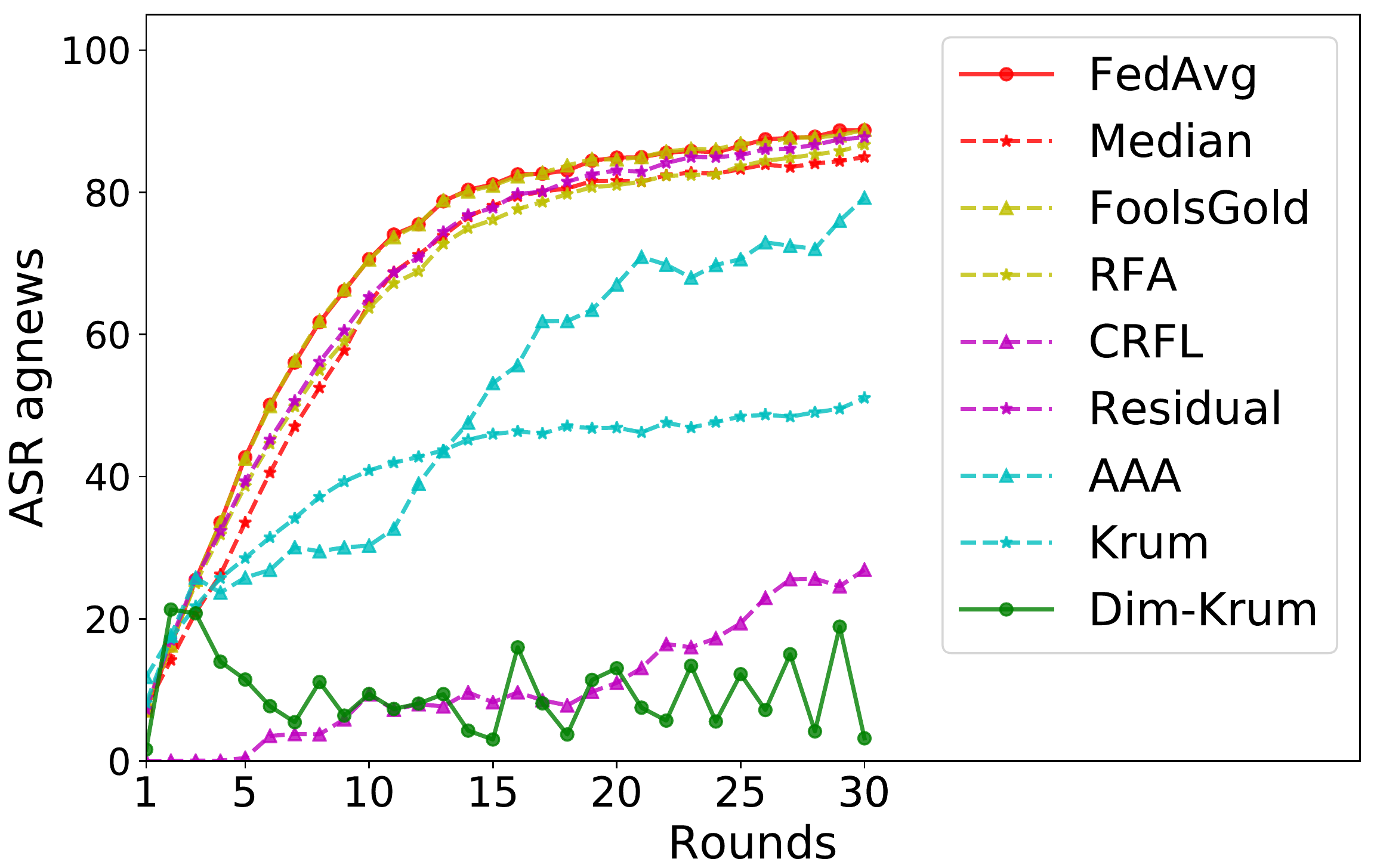}}
    \caption{Visualization of ASRs on different datasets during $30$ rounds.
    \label{fig:vis_datasets}}
    \vskip -0.1 in
\end{figure*}
   
\begin{figure*}[!h]
\centering
    \subcaptionbox{Average ASRs in non-IID cases.}{\includegraphics[width=0.48\linewidth, height=2in]{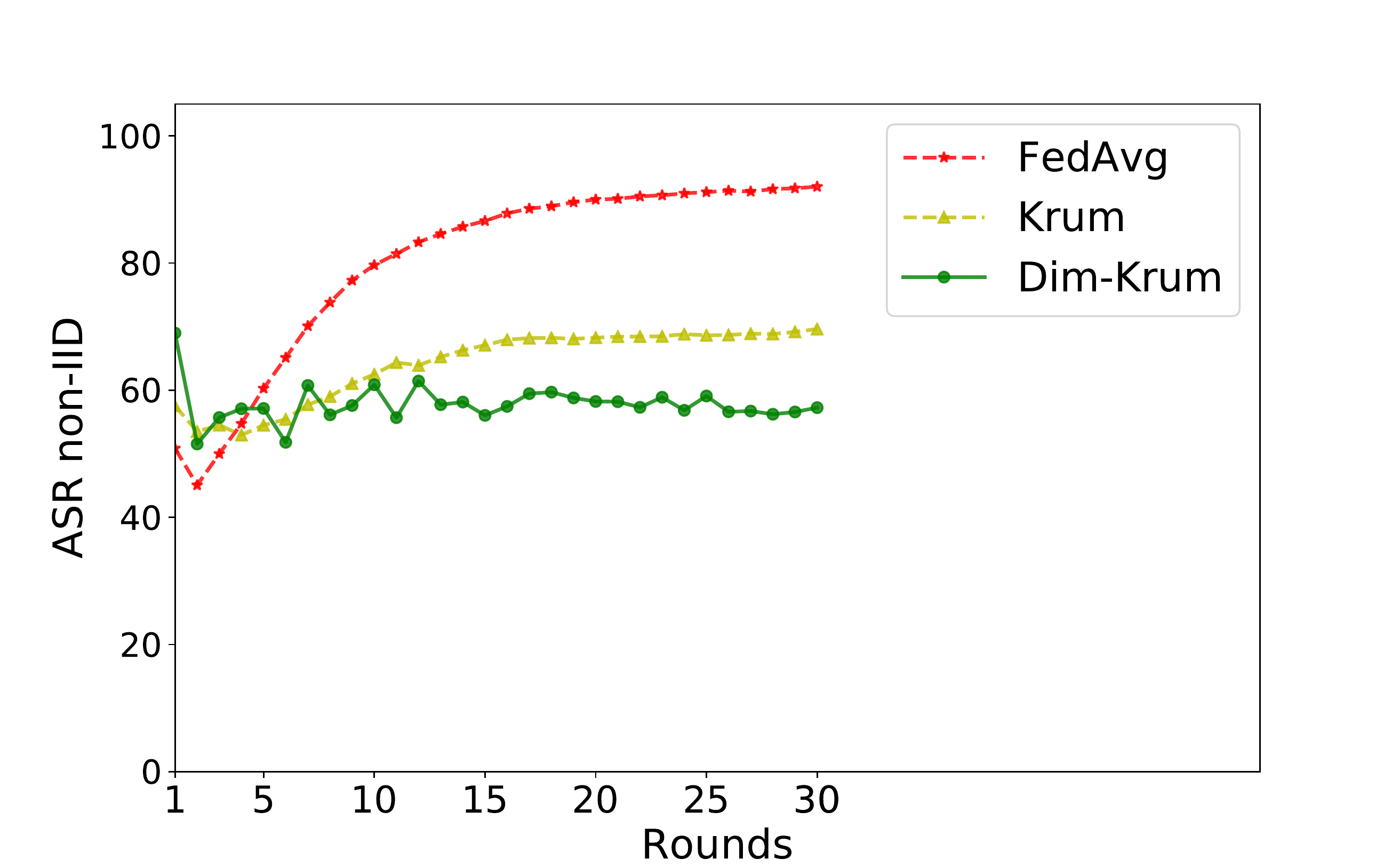}}
    \subcaptionbox{Average ASRs with two attackers.}{\includegraphics[width=0.48\linewidth, height=2in]{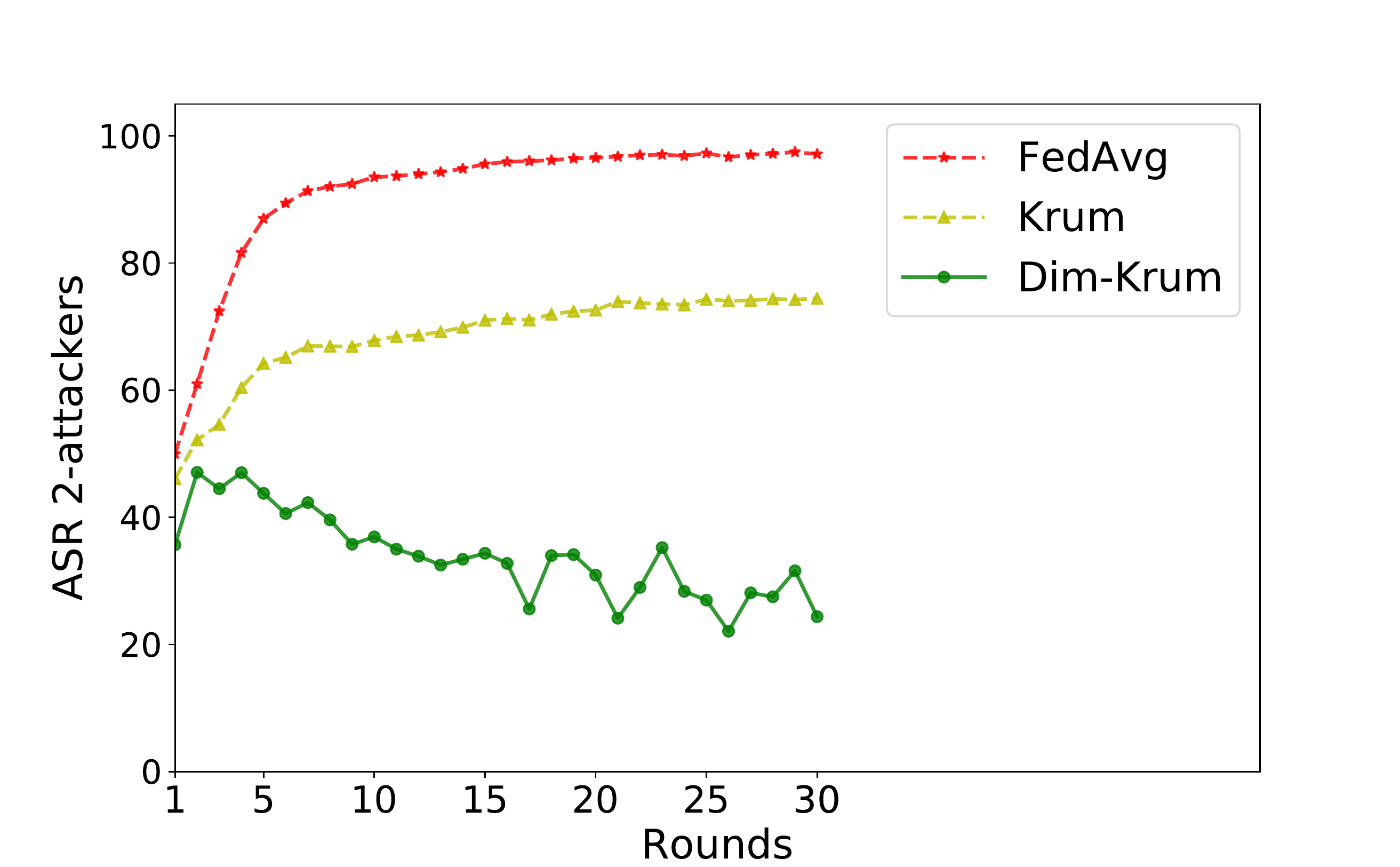}}
    \subcaptionbox{Average ASRs with three attackers.}{\includegraphics[width=0.48\linewidth, height=2in]{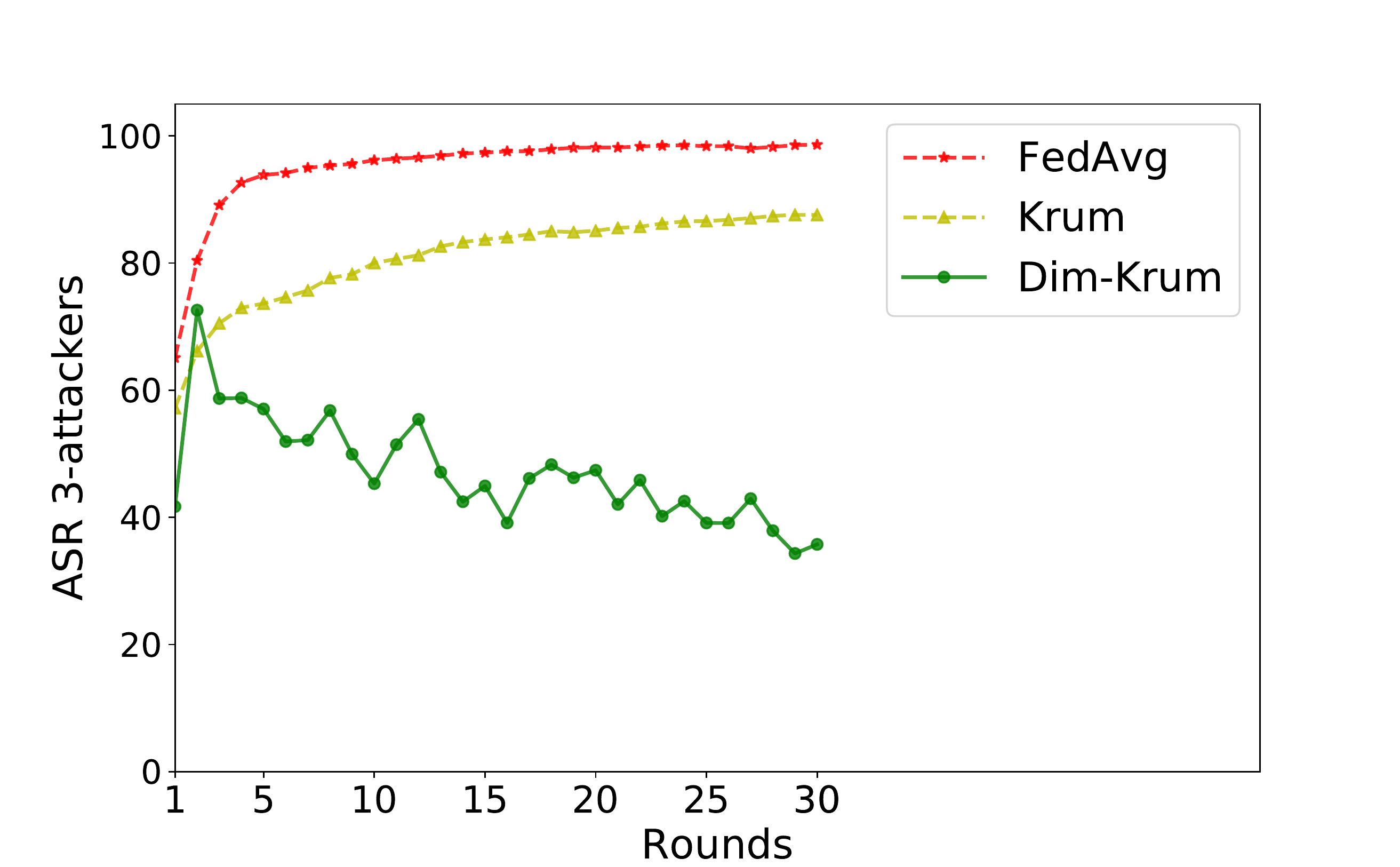}}
    \subcaptionbox{Average ASRs with four attackers.}{\includegraphics[width=0.48\linewidth, height=2in]{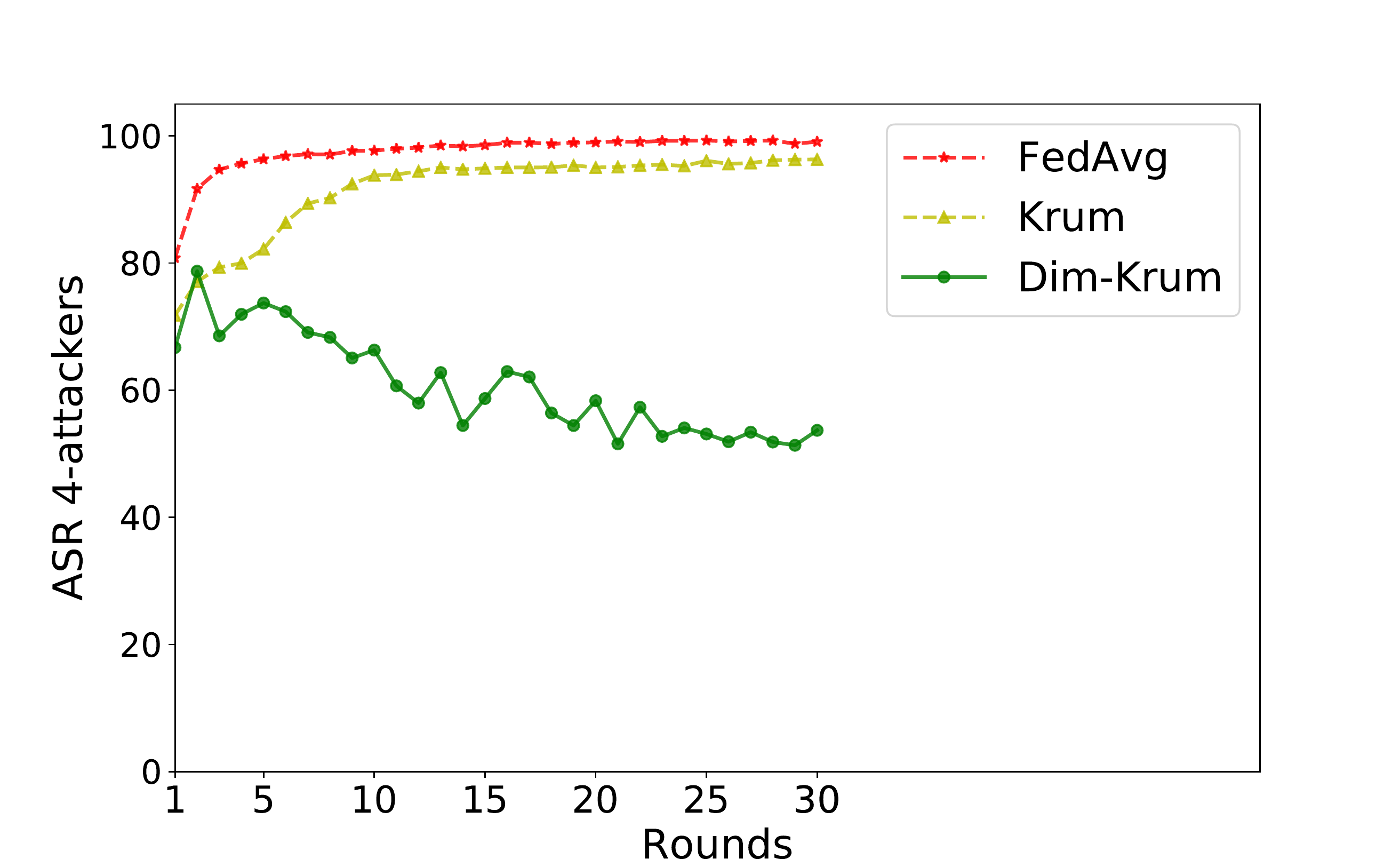}}
    \vskip -0.05 in
    \caption{Visualization of ASRs on Non-IID and multiple attacker cases during $30$ rounds.
    \label{fig:vis_noIID_mult}}
    \vskip -0.1 in
\end{figure*}
\end{document}